\documentclass{article}

\usepackage{microtype}
\usepackage{graphicx}
\usepackage{subcaption}
\usepackage{booktabs} 
\usepackage{multirow}
\usepackage{hyperref}

\usepackage[accepted]{icml2026}

\usepackage{amsmath}
\usepackage{amssymb}
\usepackage{mathtools}
\usepackage{amsthm}
\usepackage{graphicx} 
\usepackage[table]{xcolor}

\definecolor{tabblue}{HTML}{1F77B4}
\definecolor{taborange}{HTML}{ff7f0e}
\definecolor{tabgreen}{HTML}{2ca02c}

\usepackage[capitalize,noabbrev]{cleveref}

\usepackage{arydshln}
\usepackage{enumitem}
\usepackage{makecell, multirow, booktabs}
\usepackage[most,skins,theorems]{tcolorbox}
\tcbset{
  aibox/.style={
    width=\linewidth,
    top=7pt,
    bottom=2pt,
    colback=blue!6!white,
    colframe=black,
    colbacktitle=black,
    enhanced,
    center,
    attach boxed title to top left={yshift=-0.1in,xshift=0.15in},
    boxed title style={boxrule=0pt,colframe=white,},
  }
}
\newtcolorbox{AIbox}[2][]{aibox,title=#2,#1}

\theoremstyle{plain}
\newtheorem{theorem}{Theorem}[section]

\newtheorem{pro}{Problem}

\theoremstyle{definition}
\newtheorem{definition}[theorem]{Definition}
\newtheorem{assumption}[theorem]{Assumption}
\theoremstyle{remark}

\newcommand{\thinkstart}[1]
{\textcolor{gray}{\texttt{<think>}}}

\newcommand{\thinkend}[1]
{\textcolor{gray}{\texttt{</think>}}}

\newcommand{\tollcallstart}[1]
{\textcolor{gray}{\texttt{<tool\_call>}}}

\newcommand{\tollcallend}[1]
{\textcolor{gray}{\texttt{</tool\_call>}}}

\newcommand{\tollresponsestart}[1]
{\textcolor{gray}{\texttt{<tool\_response>}}}

\newcommand{\tollresponseend}[1]
{\textcolor{gray}{\texttt{</tool\_response>}}}

\newcommand{\tollresponseskipstart}[1]
{\textcolor{gray}{\texttt{<omit\_tool\_response\_N\_...>}}}

\newcommand{\tollresponseskipend}[1]
{\textcolor{gray}{\texttt{</omit\_tool\_response\_N\_...>}}}

\newcommand{\answerstart}[1]
{\textcolor{gray}{\texttt{<answer>}}}

\newcommand{\answerend}[1]
{\textcolor{gray}{\texttt{</answer>}}}

\usepackage[textsize=tiny]{todonotes}

\icmltitlerunning{Agent-Omit: Adaptive Context Omission for Efficient LLM Agents }

\begin{document}

\twocolumn[
  \icmltitle{Agent-Omit: Adaptive Context Omission for Efficient LLM Agents}



  \icmlsetsymbol{equal}{*}

  \begin{icmlauthorlist}
    \icmlauthor{Yansong Ning}{yyy}
    \icmlauthor{Jun Fang}{comp}
    \icmlauthor{Naiqiang Tan}{comp}
    \icmlauthor{Hao Liu}{yyy}
  \end{icmlauthorlist}

  \icmlaffiliation{yyy}{AI Thrust, The Hong Kong University of Science and Technology (Guangzhou)}
  \icmlaffiliation{comp}{Didichuxing Co. Ltd}
  \icmlcorrespondingauthor{}{liuh@ust.hk}

  \icmlkeywords{Efficient LLM Agents, Agentic Reinforcement Learning}

  \vskip 0.3in
]



\printAffiliationsAndNotice{}  

\begin{abstract}
Managing agent context (e.g., thought and observation) during multi-turn agent-environment interactions is an emerging strategy to improve agent efficiency. 
However, existing studies treat the entire interaction trajectories equally, overlooking that the thought necessity and observation utility vary across turns.
To this end, we first conduct quantitative investigations into how thought and observation affect agent effectiveness and efficiency.
Based on our findings, we propose \textbf{Agent-Omit}, a unified training framework that empowers LLM agents to adaptively omit redundant thoughts and observations.
Specifically, we first synthesize a small amount of cold-start data, including both single-turn and multi-turn omission scenarios, to fine-tune the agent for omission behaviors. 
Furthermore, we introduce an omit-aware agentic reinforcement learning approach, incorporating a dual sampling mechanism and a tailored omission reward to incentivize the agent's adaptive omission capability.
Theoretically, we prove that the deviation of our omission policy is upper-bounded by KL-divergence.
Experimental results on five agent benchmarks show that our constructed Agent-Omit-8B could obtain performance comparable to seven frontier LLM agents, and achieve the best effectiveness-efficiency trade-off among seven efficient LLM agent methods.
Our code and data are available at \href{https://github.com/usail-hkust/Agent-Omit}{https://github.com/usail-hkust/Agent-Omit}.
\end{abstract}

\section{Introduction}
Large language model (LLM) agents solve complex real-world tasks through autonomously interacting with external tool/resources with interleave thought, action, and environment observation \cite{su2025scaling}.
Recently, the paradigm of agentic reinforcement learning (RL) has further pushed the boundaries of this field \cite{shang2025rstar2}.
By interacting with environments and iteratively refining agent policy based on task-specific feedback, agentic LLMs like Kimi-K2 \cite{team2025kimi} and DeepSeek-V3.2 \cite{liu2025deepseek} have demonstrated remarkable capabilities in a wide of domain applications, such as deep search, web navigation, digital game, embodied decision-making, and scientific discovery \cite{xi2025agentgym}.
Despite these advances, these agents often suffer in generating redundant thought even for simple tool-call actions and accumulating excessive observation context over multiple turns \cite{wang2025efficient}, limiting their efficiency and practical applicability.

\begin{figure}
    \centering
    \includegraphics[width=1\linewidth]{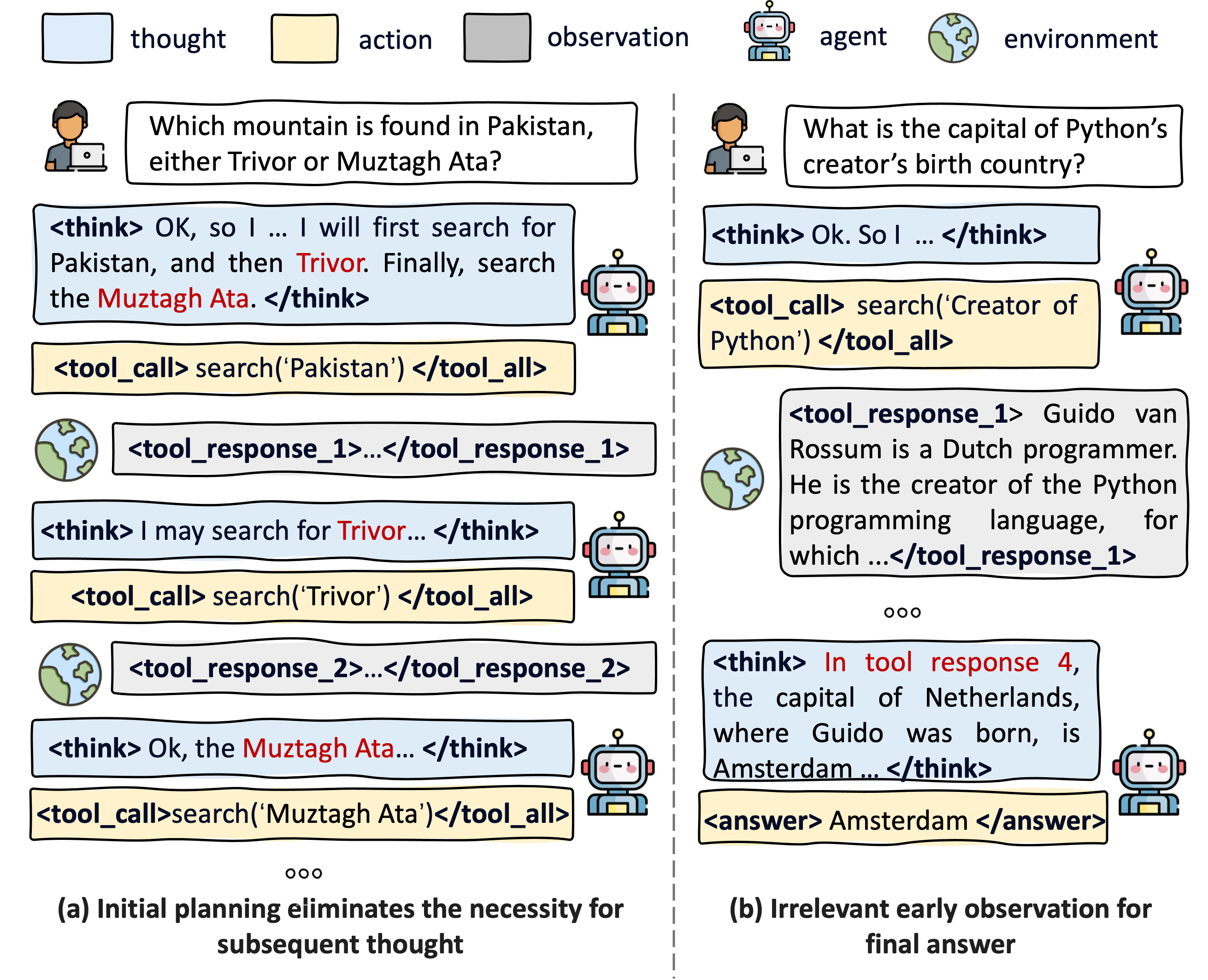}
    \caption{Illustrative examples of how thought necessity and observation utility vary across turns. (a) Initial planning (e.g., \textbf{search for Trivor and Muztagh Ata}) already determines the subsequent tool call action, making follow-up thought redundant; (b) Observations from early turns are unuseful in the last turn, because \textbf{only tool response in turn 4 is used for the answer summarization}.}
    \label{fig1_intro}
\end{figure}

Recent efforts have explored various strategies to mitigate these overheads.
Overall, they can be categorized into three paradigms: \emph{Thought Management (TM)}, \emph{Observation Management (OM)}, and \emph{Thought\&Observation Management (TOM)}.
\emph{TM} and \emph{OM} are the most direct strategies, focusing on compressing thought or pruning historical observations.
For example, ToolLight \cite{chen2025toward} and DEPO \cite{chen2025depo} employ fine-tuning to compress the thought token length, while Observation-Mask \cite{lindenbauer2025complexity} and DeepMiner \cite{tang2025beyond} use the heuristic strategy to selectively omit historical observations.
To address both simultaneously, \emph{TOM}-based approaches like MEM-Agent \cite{yu2025memagent} and ReSum \cite{wu2025resum} employ LLM-based summarization tools to jointly compress thought and observation into a concise context.
However, these studies \textbf{tend to equally compress or modify the entire interaction trajectory}, overlooking that the influence of thoughts and observations can vary across different turns.
Our motivation stems from a key assumption: the necessity of thoughts and observations utility varies across turns.
As illustrated in Figure \ref{fig1_intro}(a), an agent's initial high-level planning often makes follow-up reasoning thoughts redundant once the execution is clear.
Similarly, early-turn observations, while necessary for initial reasoning, usually become irrelevant noise during the final answer summarization phase.
These limitations hinders the development of more flexible and efficient agents.

To address this, we first conduct a quantitative investigation into how thought and observation affect agent effectiveness and efficiency.
Using Monte Carlo rollouts \cite{snell2024scaling}, we observe that not all turns contribute equally to task success, and selectively omitting redundant thought/observation can significantly reduce token cost without sacrificing accuracy.
Building upon these insights, we propose \textbf{Agent-Omit}, a unified framework that empowers LLM agents to adaptively omit redundant thoughts and observations.
Our approach consists of two stages: (1) \emph{Agent Omission Behavior Synthesis}, which constructs both of single-turn and multi-turn omission cold-start data to provide initial supervision for efficient agentic reasoning patterns; and (2) \emph{Omit-Aware Agentic Reinforcement Learning}, which incorporates a dual sampling mechanism and a tailored omission reward to progressively improve the agent’s capability for adaptive thought/observation omission.
Finally, we theoretically prove that the deviation of our omission policy is upper-bounded by the KL-divergence.


We evaluate Agent-Omit on five benchmarks, including DeepSearch, WebShop, TextCraft, BabyAI, and SciWorld.
Experimental results show that Agent-Omit-8B achieves accuracy comparable to seven frontier LLMs (e.g., DeepSeek-R1-0528 and o3), while substantially reducing token cost.
Moreover, when applied to Qwen3-8B, Agent-Omit consistently outperforms seven efficient LLM agent construction methods, achieving the best effectiveness–efficiency trade-off.
Further analysis reveals that the trained agent can adaptively omit 3–4 rounds of thought/observation, where omissions predominantly occurring in intermediate turns, aligning with our empirical findings.

In summary, our key contributions are threefold:
(1) We establish a unified thought and observation analysis framework for LLM agent, quantitatively proving that omission mechanism can improve agent efficiency without sacrificing their effectiveness.
(2) We propose Agent-Omit, a framework that integrates omit behavior synthesis with omit-aware agentic RL to train agents capable of adaptive context management.
(3) Extensive experiments and theoretical analysis demonstrate the effectiveness of our proposed approach, offering a new paradigm for efficient LLM agents.


\paragraph{Conflict of Interest Disclosure.}
The authors declare no potential financial conflicts of interest 
related to the work presented in this paper.

\section{Preliminary}
To formalize our investigation, we first define the interaction between an LLM agent and an environment $\mathcal{E}$ over multiple turns. 
At each turn $t$, the state of the interaction is characterized by three components:

\begin{definition}
\textbf{Thought ($\tau_t$).} 
\emph{
A thought is the chain-of-thought reasoning where the agent analyzes current context, plans subsequent actions, or reflects on historical feedback. 
}
\end{definition}

\begin{definition}
\textbf{Action ($a_t$).} 
\emph{
An action is an operation chosen by the agent from a predefined action space, such as invoking a tool or generating the final response to the user. 
}
\end{definition}

\begin{definition}
\textbf{Observation ($o_t$).} 
\emph{
An observation, denoted as $o_t = \mathcal{E}(a_t)$, is the feedback returned by the environment after executing action $a_t$.
It provides the necessary grounding for the next interaction turn.
}
\end{definition}

\section{Analysis of Thought and Observation on Agent Effectiveness and Efficiency}
\label{analysis}
We hypothesize that the necessity of thoughts and observations utility is not equal but rather turn-dependent:

\begin{assumption}
\textbf{Thought necessity varies across turns.}
\emph{
Not all turns require detailed reasoning processes.
While complex steps necessitate in-depth thought for planning or reflection, serval intermediate turns can often be executed directly without extra thought, e.g., execute a simple tool-call action $a_t$ derived from a prior plan $\tau_{<t}$.
}
\end{assumption}

\begin{assumption}
\textbf{Observation utility varies across turns.}
\emph{
Not all turns require entire observation context.
As the interaction trajectory grows longer, certain past observations $o_{<t}$ will become redundant or irrelevant to the current action $a_t$, e.g., outdated search results cannot contribute to the final answer summarization.
}
\end{assumption}

Then, we conduct quantitative analysis to validate the aforementioned assumption.

\begin{figure*}
    \centering
    \includegraphics[width=1\linewidth]{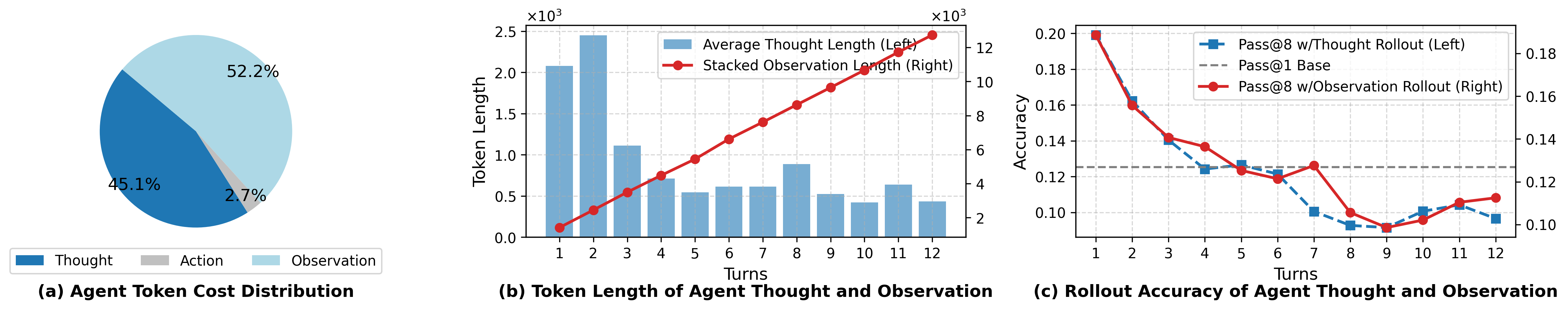}
    \caption{Quantitative analysis of how thought and observation affect agent efficiency and effectiveness across interaction turns on WebShop environment using Qwen3-8B.}
    \label{fig2_quantitative_analysis}
\end{figure*}

\begin{figure*}
    \centering
    \includegraphics[width=1\linewidth]{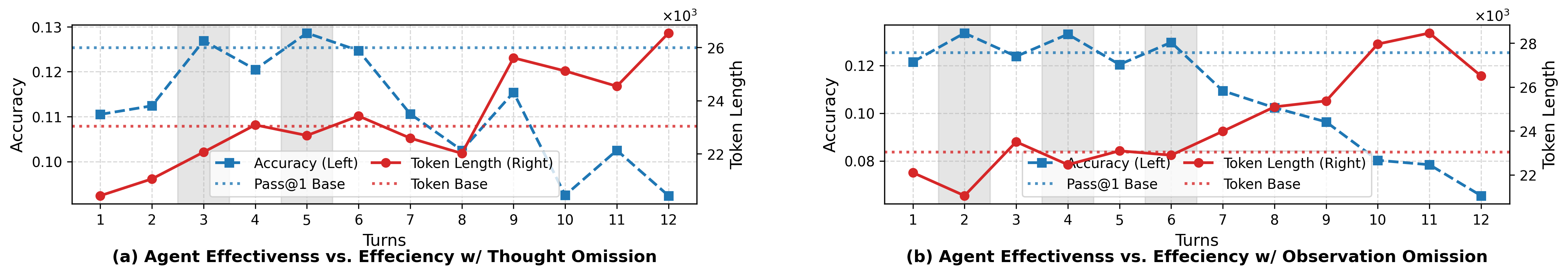}
    \caption{
    The effect of thought and observation omission on agent effectiveness and efficiency across interaction turns on the WebShop environment using the Qwen3-8B. 
    For each turn $t \in \{1, \ldots, 12\}$, we omit the thought $\tau_t$ (or observation $o_t$) at that specific turn and let the agent continue the remaining reasoning trajectory to task completion. 
    Results are averaged over all 200 randomly sampled test cases. 
    The \colorbox{gray!20}{grey shaded regions} indicate turns where omission reduces token cost without sacrificing task accuracy (i.e., the accuracy $\geq$ Pass@1 Base and the token length $<$ Token Base).
    }
    \label{fig3_thought_observation_omission}
\end{figure*}
\subsection{Quantitative Analysis}
Using Qwen3-8B on the WebShop \cite{yao2022webshop} environment, we examine the turn-wise token costs caused by thought and observation, and their respective contributions to task accuracy.
The key findings, illustrated in Figure~\ref{fig2_quantitative_analysis}, are summarized as follows:

\begin{itemize}[leftmargin=*, nosep]
\item \textbf{Agent Bottlenecks on Thought \& Observation.}
As shown in Figure~\ref{fig2_quantitative_analysis}(a), the token cost of agent is dominated by thought (45.1\%) and observation (52.2\%). 
Actions account for only 2.7\%, indicating that the major bottleneck for agent efficiency lies in the thought and environmental observations, rather than the execution of actions itself.

\item \textbf{Impact on Efficiency.}
Figure~\ref{fig2_quantitative_analysis}(b) further shows how these token costs evolve over interaction turns.
We find that the \emph{thoughts are front-loaded}, with heavy token consumption in early turns (e.g., Turns 1-2) for high-level planning.
In contrast, the \emph{observation exhibits a linear growth} due to the stacking mechanism, leading a heavy context burden for later turns.
Overall, these phenomenon demonstrate that \textbf{the impact of thought/observation on agent efficiency varies across turns}. 

\item \textbf{Impact on Effectiveness.}
Third, we perform Monte Carlo rollouts at each turn to measure the contribution of thought/observation to task success.
As shown in Figure~\ref{fig2_quantitative_analysis}(c), while early thoughts and observations are critical for high accuracy (Pass@8), the accuracy gain diminishes rapidly as the interaction progresses, with later turns often falling below the Pass@1 baseline.
These observation demonstrates that \textbf{not all thoughts/observations at each turn contribute equally for agent effectiveness.}
\end{itemize}

\textbf{Key Insight.}
Based on these observations, a natural question arises: \emph{Why not selectively omit the thoughts and observations that consume excessive token length but fail to contribute to accuracy?} 
To answer this question, we propose a thought and observation omission mechanism to investigate whether such selective omission can be achieved without degrading performance.

\begin{figure*}
    \centering
    \includegraphics[width=1\linewidth]{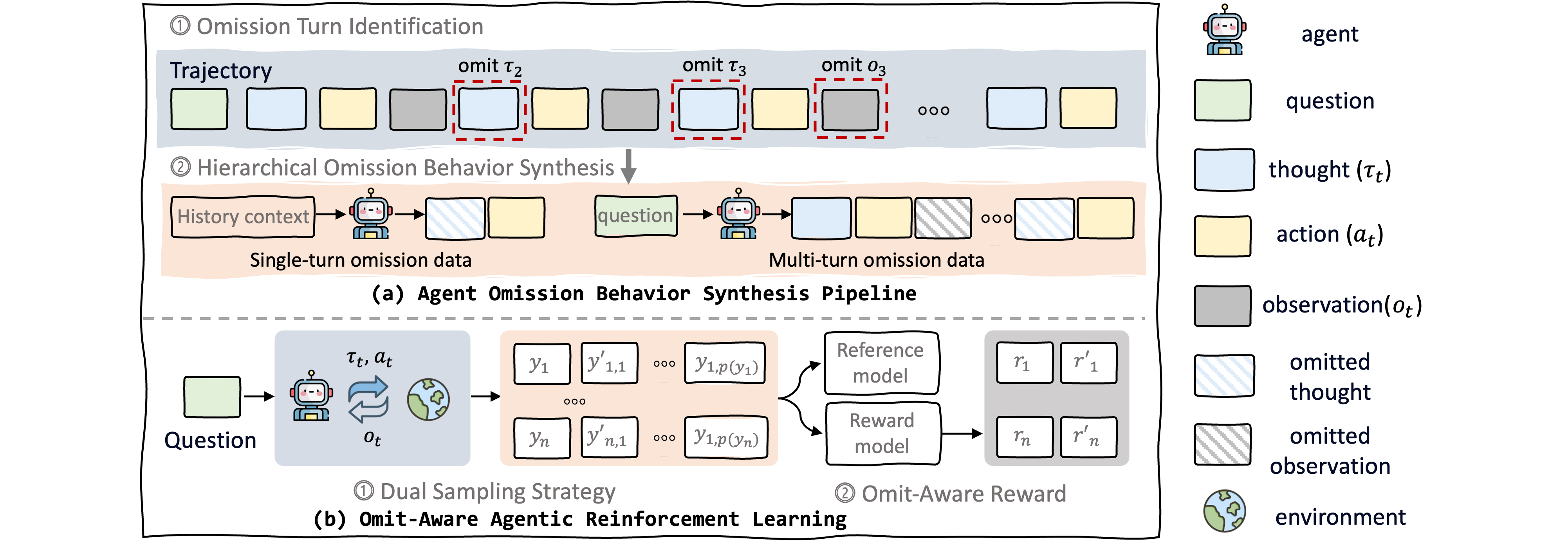}
    \caption{Overview of our proposed framework \textbf{Agent-Omit}.
    }
    \label{fig_method}
\end{figure*}
\subsection{Thought and Observation Omission Mechanism}
To test whether these redundant thoughts/observations can be safely omitted, we conducted a controlled intervention: given an interaction trajectory until turn $t$, we explicitly omit either the generated thought $\tau_t$ or the observation $o_t$, and prompt the agent to continue the reasoning trajectory to derive the final answer. 
We then compare the resulting task accuracy and total token cost with that of a base agent that continues the trajectory without omission.
The overall results are shown in Figure~\ref{fig3_thought_observation_omission}.
We summarize the key findings observed as follows:

\begin{itemize}[leftmargin=*, nosep]

\item \textbf{Thought Omission.}
Figure~\ref{fig3_thought_observation_omission}(a) shows that omitting intermediate reasoning steps yields improved accuracy and reduced token usage.
However, omitting thoughts during the initial or final turns proves detrimental.
We attribute this phenomenon to two factors: (1) in the initial turn, explicit reasoning is crucial for high-level planning; (2) in the final turns, removing critical thought disrupts contextual continuity, forcing the agent to generate additional thoughts to recover the lost context.

\item \textbf{Observation Omission.}
Similarly, Figure~\ref{fig3_thought_observation_omission}(b) indicates that observations are most omissible in the intermediate turns (Turns 2, 4, 6), where agent's token cost is decreased without sacrificing accuracy.
In later turns, observations become indispensable because omitting them causes a significant drop in accuracy.
We attribute this to the fact that later observations may contain critical information. 
When this is missing, the agent not only fails to solve the task but also tends to generate excessive reasoning tokens to bridge the information gap.
\end{itemize}

\textbf{Motivation for Adaptive Omission.} 
The existence of grey regions in Figure~\ref{fig3_thought_observation_omission} demonstrates that substantial efficiency gains are possible without sacrificing performance.
These findings reveal a crucial insight: \emph{It is possible to reduce token costs without sacrificing accuracy, but only if the omission is applied selectively.} 
However, because the optimal omission points are dynamic and task-dependent, a static heuristic is insufficient. 
This motivates the need for \textbf{Agent-Omit}, a framework that learns an adaptive policy to identify and omit redundant context during interaction.

\section{Efficient LLM Agents with Agent-Omit}
Building on our analysis, we define the agent’s objective as balancing task success with minimal context overhead. 
We formalize this as a selective decision-making process:

\begin{pro}
\textbf{Adaptive Thought and Observation Omission.}
Given a question $q$ and the agent interaction history $\left \{ \tau_{1}, a_{1}, o_{1}, \tau_{2}, a_{2}, o_{2}, \ldots, \tau_{t-1}, a_{t-1}, o_{t-1}\right \}$ at turn $t$, the agent generates a thought $\tau_t$ and action $a_t$:
\begin{equation}
\begin{aligned}
\left \{  \tau_{t}, a_{t}\right \}=\pi_{\theta} \left(q, \left \{ \tau_{1}, a_{1}, o_{1}, \ldots, \tau_{t-1}, a_{t-1}, o_{t-1}\right \}\right),
\end{aligned}
\end{equation}
where $\pi_{\theta}$ denotes the policy of the agent, and $\tau_t$ is the thought process, which the policy may adaptively reduce to an empty string $\emptyset$ if reasoning is redundant. 
The action consists of the environment action and an observation omission set $\Gamma_t \subseteq \{1, \dots, t-1\}$, identifying specific historical turns whose observations $o_i$ are irrelevant and should be omitted.
\end{pro}

To this end, we propose \textbf{Agent-Omit}. As illustrated in Figure \ref{fig_method}, Agent-Omit achieve this through a two-stage optimization: (a) synthesizing cold-start data to fine-tune agent to establish the omission format, and (b) an omit-aware agentic RL method for agent training to achieve adaptive thought/observation omission.

\subsection{Agent Omission Behavior Synthesis for Cold Start}
To bridge the gap between generalist LLMs and omission-aware agents, we construct a synthetic cold-start dataset that explicitly teaches the agent (i) how to execute omission, and (ii) how to continue the reasoning process under omitted historical context.
Specifically, it consists of three phases: omission turn identification, hierarchical omission behavior synthesis, and cold-start training.

\textbf{Omission Turn Identification.}
Instead of relying on heuristics \cite{tang2025beyond}, we identify ``omittable'' turn by performing rollout over trajectories.
Specifically, we traverse the interaction trajectory in a forward manner and evaluate each turn $t$ by explicitly omitting either the generated thought $\tau_t$ or the observation $o_t$. 
After the omission, the agent is prompted to continue the remaining reasoning process until the final answer is produced.
If the omission leads to reduced token without degrading task accuracy, we mark the corresponding turn as omittable.
For instance, in the example shown in Figure \ref{fig_method}(a), we identify $\tau_2, \tau_3$ and $o_3$ as redundant turn that can be omitted.

\textbf{Hierarchical Omission Behavior Synthesis.}
Based on the identified turns, we build the synthetic dataset.
We adopt a hierarchical approach: first constructing single-turn omission samples to teach the agent omission format, and then expanding into multi-turn scenarios where agent should continue the reasoning process under omitted historical context:
\begin{itemize}[leftmargin=*, nosep]
\item \textbf{Single-Turn Omission.}
As shown in Figure \ref{fig_method}(a), the goal here is to teach agent how to execute omission.
Similar with Qwen3 thinking mode fusion paradigm \cite{yang2025qwen3}, we manually devise a omission system prompt to guide agent.
Guided by the tailored system prompt, the agent learns two distinct omission behaviors: i) thought omission, where the agent generates an empty thought $\emptyset$ within specific tokens (i.e, \thinkstart{} \thinkend{}); and ii) observation omission, where the agent outputs a strategy command (i.e., \tollresponseskipstart{} \tollresponseskipend{}) to explicitly trigger the removal of historical observation set $\Gamma\subseteq \{1, \dots, t-1\}$ from the context.

\item \textbf{Multi-Turn Omission.}
We then construct multi-turn full trajectories by replacing the original thoughts and observations with their corresponding omission behaviors. 
This setup forces the agent to maintain reasoning continuity even when the interaction history is ``omitted''. 
By training on these trajectories, the agent learns to avoid context-lost \cite{laban2025llms}, where it unnecessarily attempts to recover information that has been properly omitted.
\end{itemize}

\textbf{Cold-Start Training.}
Finally, we perform full-parameter fine-tuning using the synthesized dataset $\mathcal{D}_{single}$ and $\mathcal{D}_{multi}$ with standard language modeling loss:
\begin{equation}
\label{eq:finetune}
\mathcal{L}= - \mathbb{E}_{(x, y) \sim D_{single} \cup D_{multi}}\left[\log \mathcal{P}_{\pi_{\theta}}(y \mid x)\right].
\end{equation}
To ensure the policy learns only the agent-generated token, we apply a loss mask \cite{jin2025search} mechanism to all environmental observations $o_i$ during training.

\subsection{Omit-Aware Agentic Reinforcement Learning}
While the SFT initializes omission format, it relies on synthetic data and cannot generalize to dynamic interactions. 
To address this, we propose a omit-aware agentic RL framework to incentivize omission capability.
It consists of a dual sampling strategy, an omission-aware reward mechanism, and multi-objective policy learning.

\textbf{Dual Sampling Strategy.} 
A critical challenge in training omission policy is the ``context change" problem: once an observation is omitted, the agent no longer ``sees" the information it used to make that decision.
Since current agentic RL training \cite{zhang2025agent} relies on a single post-hoc trajectory, the agent never observes the pre-omission context, making omission policy unlearnable.
To address this, we decouple sampling into two process:
\begin{itemize}[leftmargin=*, nosep]
\item 
\textbf{Full Trajectory ($y$).} 
Following existing RL paradigm \cite{feng2025retool}, we sample a complete multi-turn interaction trajectory where the agent's omission action are executed. 
This stream evaluates the overall efficiency and final task success of the interaction.

\item 
\textbf{Partial Trajectory ($y'$).} 
For each turn where an omission is triggered, we treat the current context and the agent’s single-turn thought/action as a partial trajectory.
For each full trajectory $y$, this process yields a variable number, denoted as $p(y)$, of partial trajectories.
This design ensures that agent can learn the omission policy conditioned on the pre-omission context, instead of learning from already-compressed trajectories.

\end{itemize}

\textbf{Omit-Aware Reward.} 
We design two types of reward function to balances task correctness and token efficiency:
\begin{itemize}[leftmargin=*, nosep]

\item 
\textbf{Task Reward ($R_{task}$).}
For both full and partial trajectories, the primary objective is accuracy.
We assign a task reward based on the correctness of the full trajectory or the derived final answer from the partial trajectory. 

\item 
\textbf{Omission Reward ($R_{omit}$).}
To explicitly encourage token reduction, we propose the omision reward for full trajectory.
It is calculated as the ratio of saved tokens:
\begin{equation}
\label{eq:omit_reward}
R_{omit}= Tok(\tau_{omitted})/Tok(y) + Tok(o_{omitted})/Tok(y),
\end{equation}
where $Tok(.)$ is the token count function.
Critically, $R_{omit}$ is set to $0$ if $R_{task} = 0$, ensuring the agent cannot achieve high efficiency through reward hacking.
\end{itemize}

\textbf{Multi-Objective Policy Learning.}
We optimize the agent policy $\pi_{\theta}$ to maximize the expected reward across both partial and full trajectory.
We employ Group Relative Policy Optimization (GRPO) \cite{shao2024deepseekmath} to stabilize training process.
Overall, the unified training objective is formulated as follows:
\begin{equation} 
\label{omit-aware_RL}
\begin{aligned}
    \max_{\pi_{\theta}} \mathbb{E}_{x \sim \mathcal{D}, \left\{ y_i, \{ y'_{i,j} \}_{j=1}^{p(y_{i})} \right\}_{i=1}^{n} \sim \pi_{\theta}(\cdot \mid x )} \Bigg[ \frac{1}{n}& \sum_{i=1}^{n} \Big( r(x, y_{i}) \\
      + \frac{1}{p(y_{i})} \sum_{j=1}^{p(y_{i})} r'(x, y'_{i,j}) \Big) \Bigg] - \beta \mathbb{D}_{\mathrm{KL}}\left[\pi_{\theta} \| \pi_{\mathrm{ref}}\right],
\end{aligned}
\end{equation}
where $n$ is the rollout size, $p(y_{i})$ is the number of partial trajectory corresponding to each full trajectory, $\beta$ is the coefficient for the KL-divergence penalty, $r(.)=(1-\mu)* R_{task} + \mu *R_{omit}$ is the reward reweighted function for the full trajectory $y$, and $r'(x, y'_{i,j}) = R_{task}$ denotes the task reward for partial trajectory $y'_{i,j}$, computed by continuing the reasoning from the partial context to derive a final answer.
Based on best empirical practice, we set $\mu=0.2$. 
We follow Verl \cite{sheng2024hybridflow} to utilize a loss mask for environment observation \cite{sheng2024hybridflow}.

\section{Theoretical Analysis}
\label{sec:theory}
In this section, we theoretically characterize the quality of the learned omission policy. Specifically, we ask: \emph{how much does the agent's effectiveness and efficiency deviate from the optimal omission strategy?} 
We show that this deviation is upper-bounded by the distributional distance between the learned policy $\pi_\theta$ and the optimal omission policy $\pi^*$, measured via KL-divergence in Equation \ref{omit-aware_RL}. 
This provides a principled guarantee that as 
$\pi_\theta$ better approximates $\pi^*$ during training, the omission quality monotonically improves.

We first establish the deviation in effectiveness (task reward) and efficiency (token cost) between our generated omission trajectory $y$ and the optimal one $y^*$:
\begin{assumption}[Semantic Lipschitz Continuity]
Assume that the agent's task accuracy $R(y)$ and token cost $C(y)$ are Lipschitz continuous with respect to the semantic distance in the trajectory embedding space:
\begin{align}
    |R(y^*) - R(y)| \leq K_r \cdot \mathrm{d}(y^*, y), \\
    |C(y^*) - C(y)| \leq K_c \cdot \mathrm{d}(y^*, y),
\end{align}
\label{leema_1}
 \vspace{-15pt}
\end{assumption}
where $\mathrm{d}(\cdot, \cdot)$ denotes the distance metric in the semantic space, $K_r$ and $K_c$ are Lipschitz constants \cite{hager1979lipschitz}.

While Assumption \ref{leema_1} ensures that similar trajectories yield similar outcomes, Agent-Omit essentially learns a policy $\pi_\theta$ that approximates the optimal omission policy $\pi^*$.
The total omission error can be modeled as the policy distributional shift during RL training.
Therefore, we further derive the bound of the omission error:

\begin{theorem}[Bounded Omission Error]
\label{thm:bound_main}
The expected deviation in effectiveness and efficiency between the learned policy $\pi_{\theta}$ and the expected policy $\pi^*$ is upper-bounded by the KL-divergence $\mathrm{KL}(\pi^*, \pi_{\theta})$:
\begin{align}
    |\mathbb{E}[R(y^*)] - \mathbb{E}[R(y)]| &\leq \delta_r + K'_r \cdot \mathrm{KL}(\pi^*, \pi_{\theta}), \\
    |\mathbb{E}[C(y^*)] - \mathbb{E}[C(y)]| &\leq \delta_c + K'_c \cdot \mathrm{KL}(\pi^*, \pi_{\theta}),
\end{align}
where $\delta_r, \delta_c$ are irreducible approximation errors, $K'_r$, $K'_c$ are the scaled Lipschitz constant.
\end{theorem}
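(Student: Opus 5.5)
The plan is to couple the two trajectory distributions and push the Lipschitz bound of Assumption~\ref{leema_1} through an optimal-transport argument, then convert transport cost into KL-divergence. Let $P^*$ and $P_\theta$ be the trajectory distributions induced by $\pi^*$ and $\pi_\theta$ for a prompt $x\sim\mathcal{D}$. For any coupling $\gamma$ of $(P^*,P_\theta)$, the Lipschitz property gives
\begin{equation*}
|\mathbb{E}[R(y^*)]-\mathbb{E}[R(y)]| \le \mathbb{E}_{(y^*,y)\sim\gamma}|R(y^*)-R(y)| \le K_r\,\mathbb{E}_\gamma[\mathrm{d}(y^*,y)].
\end{equation*}
Taking the infimum over couplings yields $K_r W_1(P^*,P_\theta)$, the Wasserstein-1 distance in the semantic metric $\mathrm{d}$. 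The same step with $C$ and $K_c$ handles efficiency. Both bounds therefore reduce to a single quantity, $W_1(P^*,P_\theta)$, and the rest of the argument is identical for $R$ and $C$.

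Next, I would bound $W_1$ by a divergence. Suppose the semantic space has bounded diameter $D$, which is natural for trajectory embeddings on a sphere or in a compact set. Then $W_1 \le D\cdot\mathrm{TV}(P^*,P_\theta)$. Pinsker's inequality gives $\mathrm{TV}\le\sqrt{\tfrac12\mathrm{KL}(P^*\|P_\theta)}$. The chain rule for KL over the autoregressive and turn-wise factorization then gives $\mathrm{KL}(P^*\|P_\theta)=\mathbb{E}_{P^*}\sum_t \mathrm{KL}(\pi^*(\cdot\mid h_t)\|\pi_\theta(\cdot\mid h_t))$, since the environment transitions $o_t=\mathcal{E}(a_t)$ are shared and cancel. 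I read this expected per-step divergence as the policy-level $\mathrm{KL}(\pi^*,\pi_\theta)$ in the statement.

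The main obstacle is that Pinsker gives a square-root dependence, $\sqrt{\mathrm{KL}}$, while the theorem is linear in $\mathrm{KL}$. This is exactly where the irreducible terms $\delta_r,\delta_c$ enter. I would use $\sqrt{u}\le \tfrac{\epsilon}{2}+\tfrac{u}{2\epsilon}$ for any $\epsilon>0$ (AM--GM) to linearize:
\begin{equation*}
K_r D\sqrt{\tfrac12\mathrm{KL}} \le \tfrac{K_r D\epsilon}{2}+\tfrac{K_r D}{4\epsilon}\mathrm{KL}.
\end{equation*}
This suggests setting $\delta_r=\tfrac{K_rD\epsilon}{2}$ and $K'_r=\tfrac{K_rD}{4\epsilon}$. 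Any residual mismatch between $\pi_\theta$ restricted to the model class and $\pi^*$ can also be absorbed into $\delta_r$: if $\pi^*$ is not realizable, I would insert the best-in-class policy $\bar\pi$ via the triangle inequality, placing $|\mathbb{E}_{\pi^*}R-\mathbb{E}_{\bar\pi}R|$ into $\delta_r$. The same choices with $K_c$ give $\delta_c$ and $K'_c$.

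If the bounded-diameter assumption is unavailable, a fallback is the transport inequality $W_1\le\sqrt{2\sigma^2\mathrm{KL}}$, valid under sub-Gaussian concentration of $\mathrm{d}$. This is linearized the same way. Either route yields the stated form, with constants carrying the horizon and diameter dependence.
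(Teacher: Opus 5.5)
Your proposal is correct and follows essentially the paper's argument: Lipschitz continuity plus the easy (coupling) direction of Kantorovich--Rubinstein gives $K_r W_1$, a transport bound $W_1 \le c\sqrt{\mathrm{KL}}$ comes next, and the AM--GM step $\sqrt{u}\le \frac{\epsilon}{2}+\frac{u}{2\epsilon}$ (the paper uses $\omega = 1/\epsilon$) turns it into $\delta_r + K'_r\,\mathrm{KL}$. Your bounded-diameter-plus-Pinsker step is just the explicit bounded-space case of the transport inequality the paper cites, and your sub-Gaussian fallback is the paper's own route; the chain-rule remark usefully pins down the policy-versus-trajectory KL identification that the paper leaves implicit, while the best-in-class detour is unnecessary and would in fact put $\mathrm{KL}(\bar\pi,\pi_\theta)$ rather than $\mathrm{KL}(\pi^*,\pi_\theta)$ into the bound.
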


Theorem \ref{thm:bound_main} suggests that, as the reinforcement learning objective in the Eq. \ref{omit-aware_RL}, the KL divergence between the learned policy $\pi_\theta$ and the optimal omission policy $\pi^*$ could be gradually minimized.
Therefore, the agent's accuracy and token cost converge to the optimal omission frontier, limited only by the approximation errors.
Detailed proofs are provided in Appendix \ref{app:proofs}.


\section{Experiments}
\subsection{Experimental Setup}

\textbf{Agent Environments.}
As shown in Table \ref{tab:dataset_stats}, to evaluate Agent-Omit across diverse reasoning and interaction patterns, we conduct experiments on five distinct domains following the AgentGym-RL \cite{xi2025agentgym}:
\begin{itemize}[leftmargin=*, nosep] 

\item \textbf{Information Search.} 
We use \textbf{DeepSearch} \cite{jin2025search}, requiring agents to invoke search engines to resovle knowledge-intensive queries. 

\item \textbf{Web Navigation.} 
We employ \textbf{WebShop} \cite{yao2022webshop}, where agents navigate e-commerce website to perform structured attribute extraction and purchasing. 

\item \textbf{Digital Games.} 
We evaluate on \textbf{TextCraft} \cite{prasad2024adapt}, a Minecraft-inspired environment for testing long-horizon planning and recipe crafting. 

\item \textbf{Embodied Control.} 
We include \textbf{BabyAI} \cite{chevalier2018babyai}, which focuses on instruction following and navigation within partially observable grid-worlds. 

\item \textbf{Scientific Discovery.} 
We use \textbf{SciWorld} \cite{wang2022scienceworld} to assess complex reasoning in physical simulations, involving hypothesis testing and experiment design. \end{itemize}

\textbf{Baselines.}
We benchmark our approach against two categories of baselines to verify both absolute Pass@1 accuracy and token efficiency gains:
\begin{itemize}[leftmargin=*, nosep]

\item  \textbf{Frontier LLM Agents.}
We compare against state-of-the-art models including DeepSeek-R1-0528 \cite{guo2025deepseek}, DeepSeek-V3.2 \cite{liu2025deepseek}, OpenAI o3/o4-mini \cite{openai2025o3}, Qwen3-235B-A22B, Qwen3-Next-80B-A3B and Qwen3-32B \cite{yang2025qwen3} as general-purpose backbones.

\item  \textbf{Efficient Agents Construction Methods.}
To validate the efficiency benefits of Agent-Omit, we compare against three paradigms of efficiency optimization:
\begin{itemize}[leftmargin=*, nosep]

\item \textbf{Thought Management}: We include Thinking-Retention \cite{liu2025deepseek}, which directly prunes historical thought; DEPO \cite{chen2025depo} and Tool-Light \cite{chen2025toward}, which compresses thought tokens via post-training.

\item \textbf{Observation Management}: We compare against Observation-Mask \cite{lindenbauer2025complexity} strategy and the heuristic sliding-window approach proposed by DeepMiner \cite{tang2025beyond}.

\item \textbf{Thought\&Observation Management}: We evaluate summarization-based methods, specifically MEM-Agent \cite{yu2025memagent} and ReSum \cite{wu2025resum}, which actively summarize the interaction context. 
\end{itemize}
\end{itemize}

\begin{table}
\centering
\caption{Statistics of agent environments and training data.}
\scalebox{0.8}{%
\begin{tabular}{lcccc}
\toprule
\multirow{2}{*}{Dataset} & \multicolumn{2}{c}{Training Set} & \multirow{2}{*}{Test Samples} & \multirow{2}{*}{\makecell{Maximum Turns / \\ Tokens}} \\
\cmidrule(lr){2-3}
 & Cold Start & RL & & \\
\midrule
DeepSearch & 3,257 & 2,000 & 400 & 8 / 32K \\
WebShop & 2,134 & 2,000 & 200 & 12 / 32K \\
TextCraft & 2,911 & 374 & 100 & 20 / 32K\\
BabyAI & 2,758 & 810 & 90 & 10 / 32K \\
SciWorld & 2,514 & 2,120 & 200 & 10 / 32K \\
\bottomrule
\end{tabular}
}
\label{tab:dataset_stats}
\end{table}

\begin{table*}
    \centering
    \caption{Comparison between Agent-Omit-4B/8B and existing frontier LLM agents. The grey-shaded rows represent non-reasoning mode, for which lower token consumption is reasonable. Token statistics for closed-source models (e.g., OpenAI o3/o4-mini) are unavailable due to serious API restrictions.}
    \label{tab:llm-comparison}
    \resizebox{\textwidth}{!}{%
        \begin{tabular}{lcccccccccc}
            \toprule
            \multirow{2}{*}{\textbf{Models}} 
            & \multicolumn{2}{c}{\textbf{DeepSearch}} & \multicolumn{2}{c}{\textbf{WebShop}} & \multicolumn{2}{c}{\textbf{TextCraft}} & \multicolumn{2}{c}{\textbf{BabyAI}} & \multicolumn{2}{c}{\textbf{SciWorld}} \\
            
            \cmidrule(lr){2-3} \cmidrule(lr){4-5} \cmidrule(lr){6-7} \cmidrule(lr){8-9} \cmidrule(lr){10-11}
            
             & Pass@1 $\uparrow$ & Avg Tok. $\downarrow$& Pass@1 $\uparrow$ & Avg Tok. $\downarrow$& Pass@1 $\uparrow$ & Avg Tok. $\downarrow$& Pass@1 $\uparrow$ & Avg Tok. $\downarrow$& Pass@1 $\uparrow$ & Avg Tok. $\downarrow$\\
            \midrule
            
            DeepSeek-R1-0528   & 25.25 & \underline{6,412} & \underline{19.37} & 11,308 & \underline{83.00} & \underline{7,435} & \underline{81.74} & 9,578 & \underline{13.23} & 18,288 \\
            \rowcolor{gray!20} DeepSeek-V3.2      & \underline{27.25} & 2,175 & 9.40  & 3,104  & 77.00 & 1,768 & 80.20 & 1,166 & 10.09 & 1,642  \\
            OpenAI o3          & \textbf{31.50} & -     & 12.43 & -      & 73.00 & -     & 78.54 & -     & 10.08 & -      \\
            OpenAI o4-mini     & 25.60 & -     & 14.69 & -      & 75.00 & -     & 75.44 & -     & 8.59  & -      \\
            Qwen3-235B-A22B    & 22.30     & 7,225     & 13.25     & \underline{7,633}      & 72.00     & 8,246     & 74.46     & \textbf{6,542}     & 9.25     & \underline{9,166}      \\
            \rowcolor{gray!20} Qwen3-Next-80B-A3B & 9.25  & 3,127 & 8.96  & 2,361  & 69.00 & 2,697 & 67.87 & 1,953 & 3.37  & 4,093  \\
            Qwen3-32B          & 19.00 & 6,640 & 11.31 & 11,872 & 59.00 & 16,294 & 64.10 & 19,489 & 3.57 & 19,211 \\
            \midrule
            
            Agent-Omit-4B-SFT  & 9.23  & 6,346 & 10.25 & 9,543  & 62.00 & 13,256 & 61.27 & 14,281 & 5.25  & 16,367 \\
            Agent-Omit-4B-RL   & 21.70 & \textbf{3,312} & 16.31 & \textbf{7,378}  & 74.00 & \textbf{5,341}  & 72.42 & \underline{7,728}  & 10.14 & \textbf{8,805}  \\
            \rowcolor{tabgreen!20} Agent-Omit-8B-SFT  & 22.25 & 6,153 & 14.43 & 11,376 & 72.00 & 11,125 & 72.25 & 12,265 & 8.54  & 12,256 \\
            \rowcolor{tabgreen!20} Agent-Omit-8B-RL   & \underline{26.56} & \textbf{4,356} & \textbf{23.57} & \underline{8,764}  & \textbf{87.00} & \textbf{7,328}  & \textbf{84.36} & \underline{6,643}  & \textbf{18.45} & \underline{9,643}  \\
            \bottomrule
        \end{tabular}%
    }
    
\end{table*}

\textbf{Implementation Details.}
We validate the proposed Agent-Omit framework—comprising a sequential Cold Start phase and an agentic RL module—using Qwen3-4B and Qwen3-8B backbones. This process yields four distinct models: Agent-Omit-4B/8B-SFT and Agent-Omit-4B/8B-RL.
For the SFT cold-start, we systhesize about 2-4K training samples, with a learning rate of 5e-6, over about 3 epochs.
For the RL training, we follow the AgentGym-RL settings and employ our proposed omit-aware policy optimization.
During RL, we use a learning rate of 5e-7.
The maximum response length is set to 32k tokens, and the maximum interaction turns varies according to task, as shown in Table \ref{tab:dataset_stats}. 
Training the Qwen3-8B and Qwen3-4B models requires 8 and 4 NVIDIA A100 GPUs, respectively.

More detailed agent environment configurations, system prompts, data synthesis, evaluation protocols, and training hyperparameters are provided in Appendix \ref{implement details}.

\subsection{Main Results}

\textbf{Comparison with Frontier LLM Agents.}
As presented in Table \ref{tab:llm-comparison}, our Agent-Omit-8B-RL demonstrates exceptional capability, achieving the highest Pass@1 score across WebShop, TextCraft, BabyAI, and SciWorld, while remaining highly competitive on DeepSearch with scores only marginally below the current state-of-the-art. 
Regarding to the efficiency, Agent-Omit-8B-RL significantly outperforms reasoning models such as DeepSeek-R1-0528 and Qwen3-32B by requiring substantially fewer tokens to achieve better accuracy. 
While it incurs a slightly higher token cost compared to non-reasoning mode like DeepSeek-V3.2 and Qwen3-Next-80B-A3B, this trade-off yields substantial gains in task accuracy. 
Notably, we find that the RL stage is pivotal, it not only further enhance model accuracy during initial SFT, but also actively improve the agent’s efficiency, leading to superior cost-effectiveness.

\textbf{Comparison with Efficient Agents Methods.}
As shown in Table \ref{tab:efficient-comparison}, on the Qwen3-8B, our method consistently outperforms existing paradigms, achieving the highest Pass@1 task accuracy while incurring the lowest average token cost across all five agent benchmarks.
On the one hand, we observe that heuristic-based thought/observation management methods (e.g., Thinking-Retention and Observation-Mask baselines) reduce token usage but suffer from significant accuracy degradation due to the lack of model adaptation. 
On the other hand, while summarization-based methods such as ReSum achieve a good effectiveness–efficiency trade-off, their accuracy improvements remain limited due to the gap between the agent’s internal reasoning and the LLM-based summarizer.
Overall, we find that Agent-Omit achieves the lowest token cost while maintaining the highest accuracy, demonstrating great potential.

\begin{figure}
    \centering
    \includegraphics[width=1\linewidth]{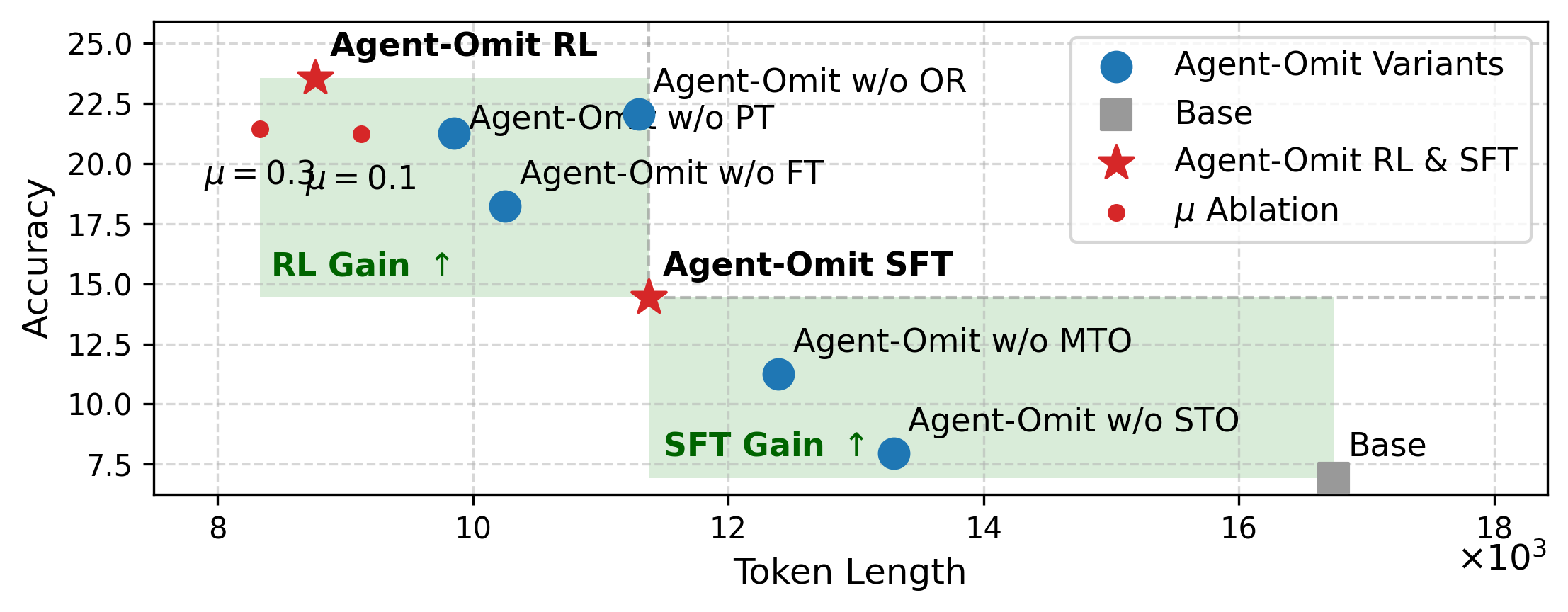}
    \caption{Pass@1 accuracy of Agent-Omit variants on WebShop environment using Qwen3-8B backbone.}
    \label{fig_ablation}
\end{figure}

\begin{table*}
    \centering
    \caption{Comparison between Agent-Omit-8B and existing efficient LLM agents construction methods.}
    \label{tab:efficient-comparison}
    
    \resizebox{\textwidth}{!}{%
        \begin{tabular}{lcccccccccc}
            \toprule
            \multirow{2}{*}{\textbf{Methods}} 
            & \multicolumn{2}{c}{\textbf{DeepSearch}} & \multicolumn{2}{c}{\textbf{WebShop}} & \multicolumn{2}{c}{\textbf{TextCraft}} & \multicolumn{2}{c}{\textbf{BabyAI}} & \multicolumn{2}{c}{\textbf{SciWorld}} \\
            
            \cmidrule(lr){2-3} \cmidrule(lr){4-5} \cmidrule(lr){6-7} \cmidrule(lr){8-9} \cmidrule(lr){10-11}
            
             & Pass@1 $\uparrow$ & Avg Tok. $\downarrow$& Pass@1 $\uparrow$ & Avg Tok. $\downarrow$& Pass@1 $\uparrow$ & Avg Tok. $\downarrow$& Pass@1 $\uparrow$ & Avg Tok. $\downarrow$& Pass@1 $\uparrow$ & Avg Tok. $\downarrow$\\
            \midrule
            
            \multicolumn{11}{l}{\textit{Qwen3-8B}} \\ 
            Base      & 17.75 & 8,281 & 6.93 & 16,741 & 55.00 & 19,587 & 60.81 & 19,162 & 2.47 & 17,052 \\ 
            \hdashline
            
            \rowcolor{gray!20} \multicolumn{11}{l}{\textit{$+$ Thought Management}} \\
             \rowcolor{gray!20} Thinking-Retention      & 10.25 & \underline{5,234} & 5.26 & 11,264 & 57.00 & 12,274 & 52.56 & 13,368 & 2.75 & \underline{11,206} \\
             \rowcolor{gray!20} DEPO              & 19.26 & 6,625 & 10.26 & 10,286 & 62.00 & 13,256 & 68.25 & 12,291 & 12.39 & 13,092 \\
             \rowcolor{gray!20} Tool-Light         & 21.29 & 6,239 & 14.57 & 10,892 & 68.00 & 11,247 & 60.25 & 14,482 & \underline{17.29} & 12,865 \\
            \hdashline
            
            \rowcolor{taborange!20}  \multicolumn{11}{l}{\textit{$+$ Observation Management}} \\
            \rowcolor{taborange!20} Observation-Mask   & 12.76 & 8,821 & 7.29 & 9,954 & 52.00 & 12,892 & 57.74 & 15,255 & 1.39 & 14,398 \\
            \rowcolor{taborange!20} DeepMiner          & 16.29 & 6,032 & 6.82 & 11,367 & 58.00 & 13,263 & 61.23 & 13,329 & 5.28 & 15,395 \\
            \hdashline

            \rowcolor{tabblue!20}  \multicolumn{11}{l}{\textit{$+$ Thought\&Observation Management}} \\
            \rowcolor{tabblue!20} MEM-Agent          & 17.62 & 5,381 & 6.62 & 10,011 & 56.00 & 11,299 & 63.08 & 11,296 & 6.62 & 11,577 \\
            \rowcolor{tabblue!20} ReSum              & \underline{22.28} & 5,724 & \underline{17.80} & \underline{9,251} & \underline{72.00} & \underline{9,258} & \underline{77.09} & \underline{8,826} & 14.48 & 11,782 \\
            \hdashline
 
             \multicolumn{11}{l}{\textit{$+$ Ours}} \\
            Agent-Omit-8B-RL         & \textbf{24.56} & \textbf{4,356} & \textbf{23.57} & \textbf{8,764} & \textbf{87.00} & \textbf{7,328} & \textbf{84.36} & \textbf{6,643} & \textbf{18.45} & \textbf{9,643} \\

            \bottomrule
        \end{tabular}%
    }
\end{table*}

\begin{figure*}
    \centering
    \includegraphics[width=1\linewidth]{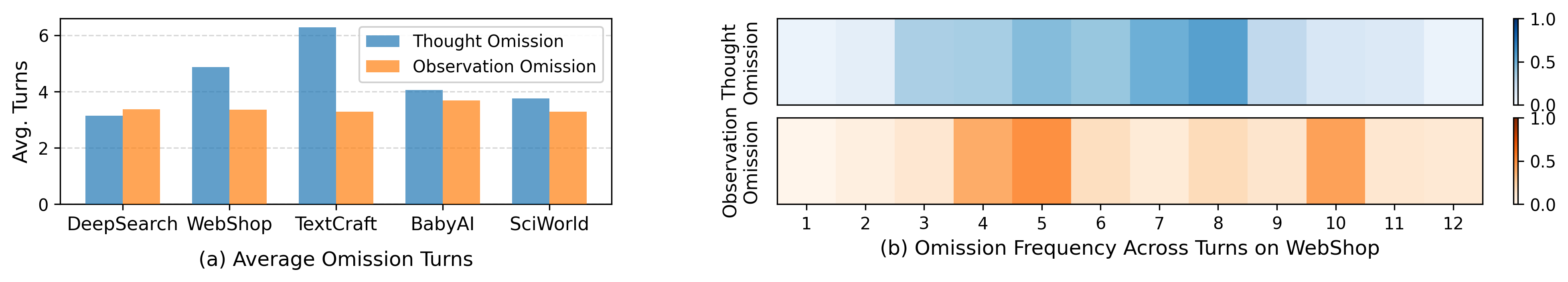}
    \caption{Statistics of average omission turns of Agent-Omit-8B-RL and its omission frequency across different turns.}
    \label{fig_indepth}
\end{figure*}

\subsection{Ablation Study}
To evaluate the contribution of each component in \textbf{Agent-Omit}, we conduct an ablation study by removing specific modules and changing hyperparameters during the SFT and RL stages. 
The variants are denoted as: Agent-Omit w/o STO/MTO (removing Single/Multi-Turn Omission data in SFT stage), Agent-Omit w/o PT/FT (removing Partial/Full Trajectory sampling in RL stage), and Agent-Omit w/o OR (removing Omission Reward from RL training).

The results are shown in Figure \ref{fig_ablation}, we derive the following key observations:
\begin{itemize} [leftmargin=*, nosep] 
\item \textbf{Dual Gain:} 
Both the SFT and RL stages contribute significantly to the improvements in agent effectiveness and efficiency, as evidenced by the green region in Figure \ref{fig_ablation}.
    
 \item \textbf{For the SFT phase:} Single-turn omission data proves to be the dominant factor, facilitating the agent's fundamental capability to handle omission behaviors effectively.
    
\item \textbf{For the RL stage:} Partial trajectory sampling plays a more critical role for training gains than full trajectory sampling. 
Furthermore, the omission reward is the primary factor for agent efficiency. 
Without it, the model fails to achieve more token reduction than SFT version.
We also find that changing the reward reweighting factor $\mu$ from 0.2 (either decreasing it to 0.1 or increasing it to 0.3) leads to a worse effectiveness–efficiency trade-off.
\end{itemize}

Overall, these results verify the necessity and effectiveness of each integrated module within the our framework.

\subsection{In-Depth Analysis}
To understand how the Agent-Omit framework works during the inference stage after training, we further analyze the distribution of omission behavior: 
\begin{itemize} [leftmargin=*, nosep] 
\item \textbf{Omission Volume:}
As shown in Figure \ref{fig_indepth}(a), across tasks, the agent adaptively omits an average of 3 to 4 turns of redundant thoughts or observations per trajectory.

\item \textbf{Omission Volume:}
Figure \ref{fig_indepth}(b) reveals that omission frequency is not uniform: it peaks during the intermediate turns (turns 3–10). 
This result indicates that redundant thoughts and observations predominantly occur in the middle phase of the agent-environment interaction process, aligning with our findings in Section \ref{analysis}.
\end{itemize}

We also provided more detailed agentic RL training analysis and case study in Appendix \ref{more experiments}.

\section{Related Work}
\subsection{Efficient LLM Agents}
LLM agents have recently transformed diverse application domains \cite{NING2025DiMAAL, Liu2025BagOT, ning2024urbankgent} from static workflows to autonomous agentic planning capable of addressing complex real-world problems \cite{liu2026towards, team2025tongyi, Liu2025MMAgentLA}.
However, the necessity for multi-turn interactions with external environments often leads to long-context (e.g., redundant thought and stacked environment observations), limiting agent efficiency \cite{cai2025balance, ning2025not}.
Many recent works have made efforts to address this, which can be categorized into three approaches:
\textbf{(1) Thought Management:} Methods such as WebLeaper \cite{tao2025webleaper}, DEPO, and ToolLight explicitly compress thought processes \cite{chen2025depo, chen2025depo} via fine-tuning and length-aware reward penalties.
Additionally, DeepSeek-V3.2 proposes a thinking retention mechanism designed to directly omit historical reasoning content \cite{liu2025deepseek}.
\textbf{(2) Observation Management:} Recent works attempt to directly mask prior-turn observations \cite{lindenbauer2025complexity} to decrease context length.
To enhance flexibility, DeepMiner \cite{tang2025beyond} employs a sliding window strategy that selectively omits observations.
\textbf{(3) Thought \& Observation Management:} These approaches rely on summarizing the interaction trajectory \cite{xiao2025improving}.
For instance, MEM1 \cite{zhou2025mem1} and MEM-Agent \cite{yu2025memagent} actively summarize interaction history into a concise context, whereas ReSum \cite{wu2025resum} and CAT \cite{liu2025context} construct a summarization tool, allowing the agent to actively invoke it for context compression.
However, these works alter the entire interaction trajectory, but fail to consider the diverse impact of thought and observation across different interaction turn, limiting the construction of more efficient and flexible agents. 

\subsection{Agentic Reinforcement Learning}
Agentic Reinforcement Learning (RL) enables agents to interact with external environments \cite{ning2025deeptravel} and optimize their policies based on received reward feedback~\cite{shang2025rstar2}. 
Recently, numerous advanced agentic RL algorithms have been proposed to enhance agent capabilities. 
For instance, ReTool~\cite{feng2025retool}, Kimi-Researcher~\cite{kimiResearcher}, and WebSailor~\cite{li2025websailor} establish end-to-end RL training frameworks that allow agents to interact within constructed sandboxes~\cite{fang2025towards}. 
More recently, works such as AgentEvolver~\cite{zhai2025agentevolver} and GenEnv~\cite{guo2025genenv} have begun to investigate policy learning for improving sample efficiency or facilitating environment evolution. 
However, there are still few works focusing on omission policy learning, and lack of tailored agentic RL algorithms.

\section{Conclusion, Limitation and Future Work}
In this work, we establish a turn-level analysis framework that quantitatively evaluates how thoughts and observations influence agent efficiency and effectiveness across different interaction turns.
Building on this analysis, we propose Agent-Omit, an experimentally and theoretically grounded framework that enables efficient LLM agent training via omission-data-driven cold-start fine-tuning and omit-aware agentic reinforcement learning.
Extensive experiments on five diverse benchmarks show that Agent-Omit allows small-sized LLMs to significantly outperform seven frontier LLM agents and achieve a better effectiveness–efficiency trade-off than seven efficient-agent methods.
However, our approach incurs additional training samples during the RL stage, limiting its scalability.
Future work will focus on scaling the omission data synthesis pipeline to the large-scale pre-training stage and extending this training paradigm to larger-size LLMs.

\section*{Acknowledgements}
This work was supported by the National Natural Science Foundation of China (Grant No. 62572417, No.92370204), National Key R\&D Program of China (Grant No.2023YFF0725004).

\section*{Impact Statement}
This work aims to improve the efficiency of LLM-based agents by reducing redundant token cost during multi-turn interactions. 
By enabling agents to adaptively omit unnecessary thoughts and  observations, our approach reduces token consumption and computational cost, which may contribute to lowering the energy footprint of deploying large language model agents. 
We do not foresee specific negative societal consequences of this work beyond those generally associated with large language models.




\nocite{langley00}

\bibliography{example_paper}
\bibliographystyle{icml2026}

\newpage
\appendix
\onecolumn

\section{Theoretical Analysis}
\label{app:proofs}
In this section, we provide the detailed mathematical derivations for the theoretical analysis of Agent-Omit.We first introduce the standard definition of Lipschitz Continuity, then derive the Semantic Lipschitz Continuity Lemma, and finally prove the Bounded Omission Error Theorem.

\subsection{Preliminaries}

\begin{definition}[Lipschitz Continuity \cite{hager1979lipschitz}]
Given two metric spaces $(X, d_X)$ and $(Y, d_Y)$, a function $f: X \to Y$ is called Lipschitz continuous if there exists a real constant $K \geq 0$ such that, for all $x_1, x_2 \in X$:
\begin{equation}
d_Y(f(x_1), f(x_2)) \leq K \cdot d_X(x_1, x_2).
\end{equation}
\end{definition}

Intuitively, this definition guarantees that the function does not oscillate infinitely fast; a bounded change in the input results in a bounded change in the output.

\subsection{Assumption 1: Semantic Lipschitz Continuity} \label{proof:lemma1}
Let function $\phi(\cdot)$ denotes the map function, which transfer the agent's interaction trajectory $y$ into corresponding token embedding. 
Let $\mathrm{d}(y^*, y) = \| \phi(y^*) - \phi(y) \|$ denote the semantic distance between two trajectories.

\textbf{Statement.}
The task reward function $R(.)$ and the token cost function $C(.)$ are Lipschitz continuous with respect to the semantic distance $\mathrm{d}(y^*, y)$. Specifically, there exist constants $K_r$ and $K_c$ such that:
\begin{align}
|R(y^*) - R(y)| &\leq K_r \cdot \mathrm{d}(y^*, y), 
\\
|C(y^*) - C(y)| &\leq K_c \cdot \mathrm{d}(y^*, y).
\end{align}

\begin{proof}
Let $\mathcal{Y}$ be the space of all possible interaction trajectories.
The effectiveness of an agent is measured by the reward function $R: \mathcal{Y} \to [0, 1]$, and efficiency is measured by the token cost function $C: \mathcal{Y} \to \mathbb{R}^+$.
The assumption of Semantic Lipschitz Continuity implies that the reward and cost landscapes are smooth with respect to the semantic representation of the trajectory.
This is a standard assumption in representation learning, positing that if two trajectories $y$ and $y^*$ have very similar semantic embeddings (i.e., $\mathrm{d}(y^*, y) \to 0$), their resulting task performance and token consumption should also be similar.
Therefore, for any two trajectories $y^*, y \in \mathcal{Y}$, the error in their effectivenss and efficency is bounded by the distance in their semantic representations scaled by the Lipschitz constants $K_r$ and $K_c$.
\end{proof}

\subsection{Theorem 1: Bounded Omission Error}
\label{proof:theorem1}
Under adaptive omission strategy, we derive the upper bound for the deviation in agent effectiveness and efficiency.

\textbf{Statement.}
The expected deviation in effectiveness and efficiency between the learned policy $\pi_{\theta}$ and the expected policy $\pi^*$ is upper-bounded by the their  KL-divergence $\mathrm{KL}(\pi^*, \pi_{\theta})$:
\begin{align}
|\mathbb{E}{y^* \sim \pi^*}[R(y^*)] - \mathbb{E}{y \sim \pi_{\theta}}[R(y)]| &\leq \delta_r + K_r \cdot \mathrm{KL}(\pi^* | \pi_{\theta}),
\\ |\mathbb{E}{y^* \sim \pi^*}[C(y^*)] - \mathbb{E}{y \sim \pi_{\theta}}[C(y)]| &\leq \delta_c + K_c \cdot \mathrm{KL}(\pi^* | \pi_{\theta}).
\end{align}

\begin{proof}
We first conduct proof for the reward function $R(y)$. The proof for the cost function $C(y)$ follows an identical logic.
Let $J(\pi) = \mathbb{E}_{y \sim \pi}[R(y)]$. 
We aim to bound the absolute difference $|J(\pi^*) - J(\pi_{\theta})|$.
Using the Kantorovich-Rubinstein duality theorem \cite{edwards2011kantorovich}, the difference in expectations of a Lipschitz function under two distributions is bounded by the Wasserstein-1 distance \cite{Villani2008OptimalTO} between the distributions.
First, we express the difference in expected rewards:
\begin{equation}|J(\pi) - J(\pi_{\theta})| = \left| \int_{\mathcal{Y}} R(y) \pi(y) dy - \int_{\mathcal{Y}} R(y) \pi_{\theta}(y) dy \right|.
\end{equation}

By the definition of the Wasserstein-1 distance $W_1(\pi^*, \pi_{\theta})$ and the Lipschitz property of $R(y)$, we have:
\begin{equation}|J(\pi) - J(\pi_{\theta})| \leq K_r \cdot W_1(\pi, \pi_{\theta}),
\end{equation}

where $W_1(\mu, \nu) = \inf_{\gamma \in \Gamma (\mu, \nu)} \mathbb{E}_{(y, y^*) \sim \gamma} [\mathrm{d}(y, y^*)]$, and $\Gamma (\mu, \nu)$ is the set of all joint distributions with marginals $\mu$ and $\nu$.

According to the Transport Inequality \cite{talagrand1995concentration}, the Wasserstein distance between two probability measures is upper-bounded by their KL divergence. 
Assuming the semantic space is bounded or satisfies the measure concentration property, there exists a constant $\varepsilon $ such that:
\begin{equation}
W_1(\pi, \pi_{\theta}) \leq \varepsilon  \cdot \sqrt{\mathrm{KL}(\pi | \pi_{\theta})}.
\end{equation}

Utilizing the inequality of arithmetic and geometric means, we can bound the square root term. For any $\omega  > 0$, the inequality $\sqrt{x} \leq \frac{1}{2\omega } + \frac{\omega }{2}x$ holds. Applying this to our bound:
\begin{equation}
|J(\pi^*) - J(\pi_{\theta})| \leq K_r \cdot \varepsilon \left( \frac{1}{2\omega } + \frac{\omega }{2} \mathrm{KL}(\pi^* | \pi_{\theta}) \right).
\end{equation}

By grouping the constant terms into a deviation constant $\delta_r = \frac{K_r \varepsilon}{2\omega }$ and redefining the coefficient for the divergence as $K'_r = \frac{K_r \varepsilon \omega }{2}$, we obtain the linear upper bound.
Consequently, the final bound for the reward difference is given by:

\begin{equation}
|\mathbb{E}_{y^* \sim \pi^*}[R(y^*)] - \mathbb{E}_{y \sim \pi_{\theta}}[R(y)]| \leq \delta_r + K'_r \cdot \mathrm{KL}(\pi^* | \pi_{\theta}),
\end{equation}
where $K_r$ represents the Lipschitz constant.

Similarly, we derive the bound for the efficiency difference:
\begin{equation}
|\mathbb{E}_{y^* \sim \pi^*}[C(y^*)] - \mathbb{E}_{y \sim \pi_{\theta}}[C(y)]| \leq \delta_c + K'_c \cdot \mathrm{KL}(\pi^* | \pi_{\theta}),
\end{equation}
where $\delta_c = \frac{K_c \varepsilon}{2\omega }$ and redefining the coefficient for the divergence as $K'_c = \frac{K_c \varepsilon \omega }{2}$, and $K_c$ represents the Lipschitz constant.
\end{proof}

\section{Implement Details}
\label{implement details}

\subsection{Agent Environment Configuration}
This section details the configuration for the five diverse environments utilized in our study. 
For each environment, we provide the sandbox implementation details, the available toolset (action space), and the envrionment settings used for agentic reinforcement learning training and evaluation stage.

\subsubsection{DeepSearch}

\begin{AIbox}{B.1.1 DeepSearch System Prompt}
Answer the provided question via invoking search engine. If you do not have enough knowledge, issue a  \tollcallstart{}...write what you want to search hear...\tollcallend{} and then STOP. Do not generate \tollresponsestart{} or \answerstart{} yet. Wait for external input wrapped in \tollresponsestart{}...external information...\tollresponseend{}. After receiving information, reason again in \thinkstart{}. If confident, output your final answer in \answerstart{}...\answerend{}.
\\

You use "\thinkstart{}your thoughts.\thinkend{}" for in-depth thinking process; or user "\thinkstart{}\thinkend{}" if you think you can directly generate correct tool call action without any thinking process. 
You can use "\tollcallstart{}your next action.\tollcallend{}" for the next tool-call action; or use "\tollresponseskipstart{}\tollresponseskipend{}\tollcallstart{}your next action.\tollcallend{}" to simultaneously generate the next action and omit prior tool responses at turn N to save context.
\\

Your output must strictly follow this format:
\\
\thinkstart{} your thoughts. \thinkend{} 
(or \thinkstart{} \thinkend{})

\tollcallstart{} your search content. \tollcallend{} 
(or \tollresponseskipstart{} \tollresponseskipend{} 
\tollcallstart{} your next action. \tollcallend{})

\tollresponsestart{} your observation after invoking the tool. \tollresponseend{} 
(or \tollresponseskipstart{} \tollresponseskipend{})

\thinkstart{} your thoughts. \thinkend{} 
(or \thinkstart{} \thinkend{})

\ldots (continue generating \tollcallstart{} \tollcallend{} for problem solving, 
or generate \answerstart{} \answerend{} if the task is finished) \ldots

\answerstart{}...\answerend{} (provide your answer here)
\\

Reminder: 

1. Do not generate \answerstart{} before receiving a corresponding \tollresponsestart{}, unless you are fully confident that no external tool invocation is required.

2. If no further external knowledge is needed, you may directly provide the final answer enclosed by \answerstart{} and \answerend{}, without detailed intermediate explanations (e.g., \answerstart{} Beijing \answerend{}). This procedure should be followed consistently.

3. The use of \thinkstart{} \thinkend{} is encouraged to preserve relevant context when you are confident about the next action.

4. The tokens \tollresponseskipstart{} \tollresponseskipend{} can be used to omit tool responses at turn $N$ for context compression, and are recommended when the interaction involves many turns or becomes stuck at a particular step.
\\

Let us begin. Remember to invoke \thinkstart{} \thinkend{} or \tollresponseskipstart{} \tollresponseskipend{} whenever necessary to save context.

\label{deepsearch_sys_prompt}
\end{AIbox}

\textbf{Sandbox Description.}
DeepSearch is designed as an information-seeking environment that simulates a search engine-based QA service. 
It provides agents with specialized APIs to interact with a search engine, facilitating iterative information retrieval and multi-step reasoning. 
This configuration allows agents to solve complex queries where internal knowledge is insufficient and external evidence integration is required. 

\textbf{Toolkit}. The primary tool available is a \textbf{search engine} API that returns relevant snippets based on agent queries.

\textbf{Setting.} Following the AgentGym-RL \cite{xi2025agentgym}, we curate a dataset comprising queries from seven benchmark datasets: NQ, TriviaQA, PopQA, HotpotQA, 2wiki, Musique, and Bamboogle. 
For training, we randomly sample 2,000 instances as our RL training dataset.
For evaluation, we randomly sample 400 instances from their respective development sets to ensure a balanced assessment across various types of knowledge retrieval. 
The maximum agent-environment interaction budget is capped at 8 turns.
We utilize the system prompt in the Table \ref{deepsearch_sys_prompt} for our efficient agents training.

\subsubsection{WebShop}

\textbf{Sandbox Description.}
We follow WebShop \cite{yao2022webshop} to construct the sandbox, which simulates a complex e-commerce environment where agents must navigate web pages to fulfill specific shopping objectives. 
The sandbox mimics human-web interaction through a structured action spaces.

\textbf{Toolkit.}
The tools can be categorized into four functional groups:
\begin{itemize} [leftmargin=*, nosep]
\item \textbf{Page Operations:} \texttt{click [id]}, \texttt{type [id] [content] [0|1] (where 1 indicates Enter)}, \texttt{hover [id]}, \texttt{press [key\_comb]}, and \texttt{scroll [down|up]}. 

\item \textbf{Tab Management:} \texttt{new\_tab}, \texttt{tab\_focus [tab\_index]}, and \texttt{close\_tab}. \item \textbf{URL Navigation:} \texttt{goto [url]}, \texttt{go\_back}, and \texttt{go\_forward}. 

\item \textbf{Completion:} \texttt{stop [answer]} to submit the final result or signal task impossibility. \end{itemize}

\textbf{Setting.}
We utilize a dataset of 2,000 randomly selected tasks for RL training and 200 samples for the test set. Given the complexity of web navigation and multi-page state transitions, the maximum interaction depth is set to 12 turns.
We utilize the system prompt in the Table \ref{Webshop_sys_prompt} for our efficient agents training.

\begin{AIbox}{B.1.2 Webshop System Prompt}
You are web shopping. I will give you instructions about what to do. You have to follow the instructions. Every round I will give you an observation and a list of available actions, you have to respond an action based on the state and instruction. You can use search action if search is available. You can click one of the buttons in click tables.
\\

You use \thinkstart{}your thoughts.\thinkend{} for in-depth thinking process; or use \thinkstart{}\thinkend{} if you think you can directly generate correct tool call action without any thinking process. 
You use \tollcallstart{}your next action.\tollcallend{} for next action; or use \tollresponseskipstart{}\tollresponseskipend{}\tollcallstart{}your next action.\tollcallend{} to simultaneously generate the next action and omit prior tool responses at turn N to save context.
\\

Your output must strictly follow this format:
\\
\thinkstart{} your thoughts. \thinkend{} 
(or \thinkstart{} \thinkend{})

\tollcallstart{} your search content. \tollcallend{} 
(or \tollresponseskipstart{} \tollresponseskipend{} 
\tollcallstart{} your next action. \tollcallend{})

\tollresponsestart{} your observation after invoking the tool. \tollresponseend{} 
(or \tollresponseskipstart{} \tollresponseskipend{})

\thinkstart{} your thoughts. \thinkend{} 
(or \thinkstart{} \thinkend{})

\ldots (continue generating \tollcallstart{} \tollcallend{} for problem solving, 
or generate \answerstart{} \answerend{} if the task is finished) \ldots

\answerstart{}...\answerend{} (provide your answer here)
\\

Reminder: 
\\
1. An action should be wrapped in "\tollcallstart{}...\tollcallend{}", and the action content should be the following structure: search[keywords] or click[value]

2. If the action is not valid, perform nothing. Keywords in search are up to you, but the value in click must be a value in the list of available actions.

3. Remember that your keywords in search should be carefully designed.

4. "\thinkstart{}\thinkend{}" is a good way to save context when you are confident about your next action.

5. "\tollresponseskipstart{}\tollresponseskipend{}" can help you save context by omitting prior tool responses at turn N, you are encouraged to use when there have too many turns or are clearly stuck on a given step.
\\

Let us begin. Remember to invoke \thinkstart{} \thinkend{} or \tollresponseskipstart{} \tollresponseskipend{} whenever necessary to save context.

\label{Webshop_sys_prompt}
\end{AIbox}

\subsubsection{TextCraft}

\textbf{Sandbox Description.} 
TextCraft \cite{prasad2024adapt} provides a text-based simulation of Minecraft, focusing on logic, crafting, and inventory management. 
The environment requires agents to decompose high-level natural language objectives into executable sub-goals. 
The sandbox provides immediate feedback (observations) after each action, requiring the agent to maintain a state-aware policy.

\textbf{Tookit.}
The toolset consists of three core commands: \begin{itemize} [leftmargin=*, nosep]
\item  \texttt{get}: Retrieve ingredients or objects from the environment. 

\item \texttt{inventory}: Inspect currently held items. 

\item \texttt{craft [target] using [ingredients]}: Execute predefined crafting recipes. \end{itemize}

\textbf{Setting.} The dataset includes 374 training instances and 90 test instances. Due to the high frequency of atomic actions required for complex crafting recipes, we extend the maximum interaction limit to 20 turns.
The system propt for Agent-Omit training is shown in Table \ref{textcraft_sys_prompt} 

\begin{AIbox}{B.1.3 TextCraft System Prompt}
You are given few useful crafting recipes to craft items in Minecraft. Crafting commands are of the format "craft [target object] using [input ingredients]".
Every round I will give you an observation, you have to respond an action based on the state and instruction. You can "get" an object (ingredients) from the inventory or the environment, look-up the game inventory by "inventory", or "craft" (target) using any of the crafting commands.
\\

You use \thinkstart{}your thoughts.\thinkend{} for in-depth thinking process; or use \thinkstart{}\thinkend{} if you think you can directly generate correct tool call action without any thinking process. 
You use \tollcallstart{}your next action.\tollcallend{} for next action; or use \tollresponseskipstart{}\tollresponseskipend{}\tollcallstart{}your next action.\tollcallend{} to simultaneously generate the next action and omit prior tool responses at turn N to save context.
\\

Your output must strictly follow this format:
\\
\thinkstart{} your thoughts. \thinkend{} 
(or \thinkstart{} \thinkend{})

\tollcallstart{} your search content. \tollcallend{} 
(or \tollresponseskipstart{} \tollresponseskipend{} 
\tollcallstart{} your next action. \tollcallend{})

\tollresponsestart{} your observation after invoking the tool. \tollresponseend{} 
(or \tollresponseskipstart{} \tollresponseskipend{})

\thinkstart{} your thoughts. \thinkend{} 
(or \thinkstart{} \thinkend{})

\ldots (continue generating \tollcallstart{} \tollcallend{} for problem solving, 
or generate \answerstart{} \answerend{} if the task is finished) \ldots

\answerstart{}...\answerend{} (provide your answer here)
\\

Reminder: 
\\
1. Always specify the quantity when using "get" and "craft" commands. - Example of get: \tollcallstart{}get 1 lapis lazuli\tollcallend{} - Example1 of craft: \tollcallstart{}craft 1 blue dye using 1 lapis lazuli\tollcallend{} - Example2 of craft: \tollcallstart{}craft 1 golden carrot using 8 gold nugget, 1 carrot\tollcallend{}

2. When using "get" command, do not specify whether the item comes from the inventory or the environment.

3. You can use ONLY crafting commands provided, do not use your own crafting commands. However, if the crafting command uses a generic ingredient like "planks", you can use special types of the same ingredient e.g. "dark oak planks" in the command instead..

4. "\thinkstart{}\thinkend{}" is a good way to save context when you are confident about your next action.

5. "\tollresponseskipstart{}\tollresponseskipend{}" can help you save context by omitting prior tool responses at turn N, you are encouraged to use when there have too many turns or are clearly stuck on a given step.
\\
Let us begin. Remember to invoke \thinkstart{} \thinkend{} or \tollresponseskipstart{} \tollresponseskipend{} whenever necessary to save context.
\label{textcraft_sys_prompt}
\end{AIbox}

\subsubsection{BabyAI}
\textbf{Sandbox Description.} 
BabyAI \cite{chevalier2018babyai} serves as a representative benchmark for embodied tasks within a controllable grid world. Agents must navigate and manipulate objects based on natural language instructions. 

\textbf{Toolkit.}
The sandbox supports a discrete action space for spatial interaction: \texttt{turn right}, \texttt{turn left}, \texttt{move forward}, \texttt{go to <obj> <id>}, \texttt{pick up <obj> <id>}, \texttt{go through <door> <id>} (for open doors), \texttt{toggle and go through <door> <id>} (requiring a matching key for locked doors), and \texttt{toggle} for immediate object interaction.

\textbf{Setting.} 
Following AgentGym-RL \cite{xi2025agentgym}, we utilize 810 samples for RL training and 90 samples for testing. 
The maximum interaction turn is constrained to 10 turns, emphasizing efficient pathfinding and instruction following.
In Agent-Omit, we use the following system prompt (Table \ref{babyai_sys_prompt}) for agent training.

\begin{AIbox}{B.1.4 BabyAI System Prompt}
You are an exploration master that wants to finish every goal you are given. Every round I will give you an observation, and you have to respond an action and your thought based on the observation to finish the given task. You are placed in a room and you need to accomplish the given goal with actions. 
You can use the following actions: 
- \texttt{turn right} 
- \texttt{turn left} 
- \texttt{move forward}
- \texttt{go to <obj> <id>} 
- \texttt{pick up <obj> <id>} 
- \texttt{go through <door> <id>: <door>} must be an open door. 
- \texttt{toggle and go through <door> <id>: <door>} can be a closed door or a locked door. 
If you want to open a locked door, you need to carry a key that is of the same color as the locked door.
- \texttt{toggle}: there is a closed or locked door right in front of you and you can toggle it.
\\

You use \thinkstart{}your thoughts.\thinkend{} for in-depth thinking process; or use \thinkstart{}\thinkend{} if you think you can directly generate correct tool call action without any thinking process. 
You use \tollcallstart{}your next action.\tollcallend{} for next action; or use \tollresponseskipstart{}\tollresponseskipend{}\tollcallstart{}your next action.\tollcallend{} to simultaneously generate the next action and omit prior tool responses at turn N to save context.
\\

Your output must strictly follow this format:
\\
\thinkstart{} your thoughts. \thinkend{} 
(or \thinkstart{} \thinkend{})

\tollcallstart{} your search content. \tollcallend{} 
(or \tollresponseskipstart{} \tollresponseskipend{} 
\tollcallstart{} your next action. \tollcallend{})

\tollresponsestart{} your observation after invoking the tool. \tollresponseend{} 
(or \tollresponseskipstart{} \tollresponseskipend{})

\thinkstart{} your thoughts. \thinkend{} 
(or \thinkstart{} \thinkend{})

\ldots (continue generating \tollcallstart{} \tollcallend{} for problem solving, 
or generate \answerstart{} \answerend{} if the task is finished) \ldots

\answerstart{}...\answerend{} (provide your answer here)
\\

Reminder: 
\\
1. You should put your action in \tollcallstart{}...\tollcallend{}.

2.Only when task is finished can you provide final answer.

3. ``\thinkstart{}\thinkend{}'' is a good way to save context when you are confident about your next action.

4.``\tollresponseskipstart{}\tollresponseskipend{}'' can help you save context by omitting prior tool responses at turn $N$, you are encouraged to use when there have too many turns or are clearly stuck on a given step.
\\

Let us begin. Remember to invoke \thinkstart{} \thinkend{} or \tollresponseskipstart{} \tollresponseskipend{} whenever necessary to save context.
\label{babyai_sys_prompt}
\end{AIbox}

\subsubsection{SciWorld}
\textbf{Sandbox Description.} 
SciWorld \cite{wang2022scienceworld} is a sophisticated environment for text-driven scientific exploration. 
It tasks agents with conducting experiments (e.g., mixing chemicals or circuit assembly) using various scientific apparatus. 

\textbf{Toolkit.}
The environment supports a rich set of 26 actions including container management (\texttt{open}, \texttt{close}, \texttt{pour}), device control (\texttt{activate}, \texttt{deactivate}, \texttt{connect}), and sensory observations (\texttt{look around}, \texttt{examine}, \texttt{read}). This environment tests the agent's ability to maintain long-term reasoning cycles in a high-dimensional state space.

\textbf{Setting.} 
The configuration includes 2,120 RL training samples and 200 test samples. 
The maximum interaction limit is set to 10 turns per task.
As shown in Table \ref{sciworld_sys_prompt}, we also display the system prompt used for sci agent training.

\begin{AIbox}{B.1.5 SciWorld System Prompt}
You are an agent for science world. Every round I will give you an observation, you have to respond an action based on the observation to finish the given task. Here are the actions you may take:
\texttt{
[{"action": "open OBJ", "description": "open a container"}, {"action": "close OBJ", "description": "close a container"}, {"action": "activate OBJ", "description": "activate a device"}, {"action": "deactivate OBJ", "description": "deactivate a device"}, {"action": "connect OBJ to OBJ", "description": "connect electrical components"}, {"action": "disconnect OBJ", "description": "disconnect electrical components"}, {"action": "use OBJ [on OBJ]", "description": "use a device/item"}, {"action": "look around", "description": "describe the current room"}, {"action": "look at OBJ", "description": "describe an object in detail"}, {"action": "look in OBJ", "description": "describe a container\'s contents"}, {"action": "read OBJ", "description": "read a note or book"}, {"action": "move OBJ to OBJ", "description": "move an object to a container"}, {"action": "pick up OBJ", "description": "move an object to the inventory"}, {"action": "put down OBJ", "description": "drop an inventory item"}, {"action": "pour OBJ into OBJ", "description": "pour a liquid into a container"}, {"action": "dunk OBJ into OBJ", "description": "dunk a container into a liquid"}, {"action": "mix OBJ", "description": "chemically mix a container"}, {"action": "go to LOC", "description": "move to a new location"}, {"action": "eat OBJ", "description": "eat a food"}, {"action": "flush OBJ", "description": "flush a toilet"}, {"action": "focus on OBJ", "description": "signal intent on a task object"}, {"action": "wait", "description": "take no action for 10 iterations"}, {"action": "wait1", "description": "take no action for 1 iteration"}, {"action":"examine OBJ","description":"provides a description of the objects present on or in a receptacle."}, {"action": "task", "description": "describe current task"}, {"action": "inventory", "description": "list your inventory"}]
}
\\

... (the same with other agent) ...
\\

Reminder: 
\\
1. An action should be wrapped in \tollcallstart{}...\tollcallend{}, and the action must be chosen from the given functions. The objects you choose must exist in the current room. Any actions except provided available actions will be regarded as illegal. 
2. Think when necessary, try to act directly more in the process.
3. After your each turn, the environment will give you immediate feedback based on your taken actions. if the envrionment output "No known action matches that input.", that means the previous action is invalid and you should try more options.
3. "\thinkstart{}\thinkend{}" is a good way to save context when you are confident about your next action.
4. "\tollresponseskipstart{}\tollresponseskipend{}" can help you save context by omitting prior tool responses at turn N, you are encouraged to use when there have too many turns or are clearly stuck on a given step.
\\
Let us begin. Remember to invoke \thinkstart{} \thinkend{} or \tollresponseskipstart{} \tollresponseskipend{} whenever necessary to save context.
\label{sciworld_sys_prompt}
\end{AIbox}

\subsection{Agent Training Configuration}

\begin{table}
\centering
\caption{Hyper-parameter configurations for SFT and RL stages across five environments. $S/R$ denotes the values for SFT and RL stages, respectively.}
\label{tab:hyperparams}
\small
\begin{tabular}{lcccccc}
\toprule
\textbf{Agent} & \textbf{Epochs (SFT/RL)} & \textbf{Learning Rate (SFT/RL)} & \textbf{Max Token Len} & \textbf{KL Coef} & \textbf{Rollouts} \\ 
\midrule
DeepSearch & 3 / 1  & $5\times10^{-6} / 5\times10^{-7}$ & 32,000 & 0.001 & 16  \\
WebShop    & 5 / 1  & $1\times10^{-6} / 1\times10^{-7}$  & 32,000  & 0.001 & 16 \\
TextCraft  & 3 / 5 & $5\times10^{-6} / 2\times10^{-7}$ & 32,000  & 0.001 & 8 \\
BabyAI     & 3 / 2  & $5\times10^{-6} / 5\times10^{-7}$  & 32,000  &0.001 & 8 \\
SciWorld   & 4 / 1  & $5\times10^{-6} / 5\times10^{-7}$   & 32,000  & 0.001 &16 \\
\bottomrule
\end{tabular}
\end{table}

\textbf{Hyper-Parameter Setting.}
We summarize the hyper-parameter configurations of SFT and RL stage across five distinct environments in Table \ref{tab:hyperparams}. 
To accommodate the varying complexity of tasks, ranging from tools-centric reasoning in DeepSearch to grounded interaction in BabyAI, we slightly adjust the learning rates, interaction turns and rollout sample numbers to ensure stable convergence.
To account for the relatively small size of the WebShop dataset, we increased the number of SFT training epochs to 5. 
Based on our practical experience, we set the learning rate to $10^{-6}$ for the SFT stage and $10^{-7}$ for the RL stage. 
Regarding the rollout settings, we observed that tasks in DeepResearch, WebShop, and SciWorld are significantly more challenging than those in the other two benchmarks.
Consequently, we increased the number of rollouts to 16 for these specific environments during the RL stage to promote more extensive exploration by the agent.

\textbf{Computational Cost.}
The training was conducted on a cluster equipped with 8 NVIDIA A100 GPUs.
For SFT stage, across all agents, the SFT phase is efficient, consistently converging within approximately 1 hour.
For the RL stage, the computational demands for RL vary significantly based on the environment's feedback loop and state complexity. 
Specifically, DeepSearch and TextCraft require approximately 32 hours to reach peak performance. The interactive nature of WebShop extends the training to 36 hours, while SciWorld's extensive state space necessitates 30 hours. 
In contrast, the relatively lightweight BabyAI environment completes its training cycle in 16 hours.

\textbf{Key Practical Experience:}
We found that over-tuning during the SFT stage is unnecessary.
Excessive training at this stage can lead to collapse in RL stage, as its primary objective is to familiarize the model with basic task-solving format following rather than deep reasoning.
Furthermore, in the Reinforcement Learning (RL) stage, we observed performance degradation on complex tasks when extending training to a second epoch. 
To maintain policy stability and prevent over-optimization, we limit RL training to a single epoch.

\begin{figure*}
    \centering
    \includegraphics[width=1\linewidth]{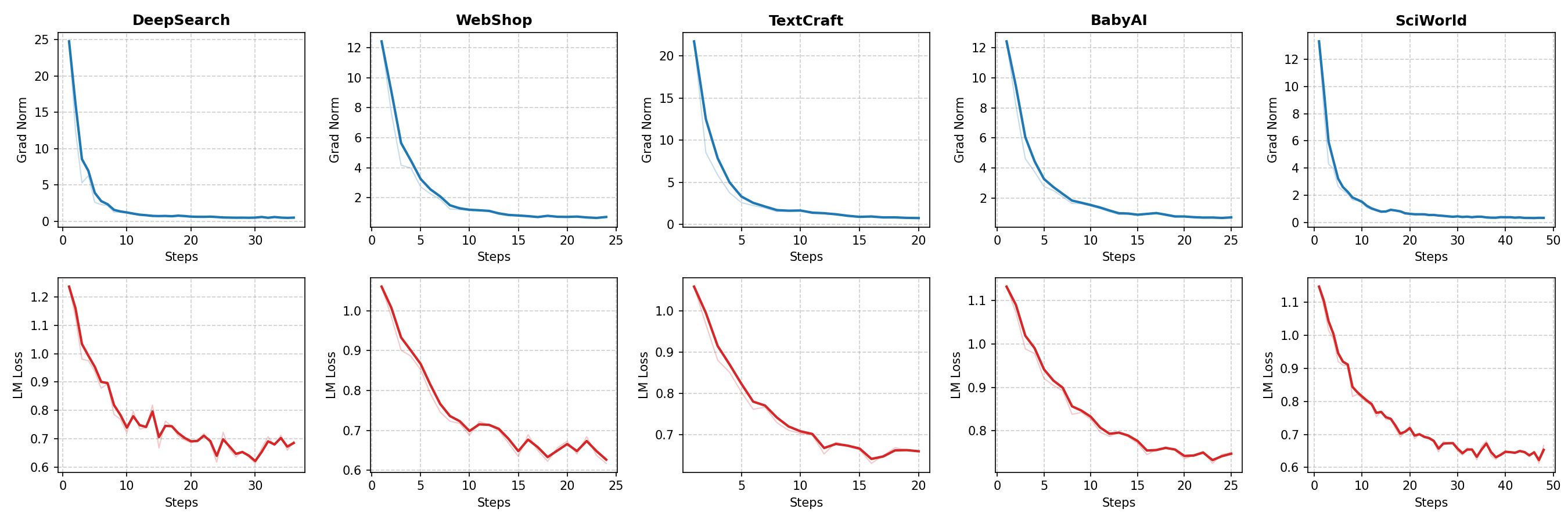}
    \caption{The SFT training process visualization of Agent-Omit on five diverse domains.}
    \label{fig_SFT_train_visualize}
\end{figure*}
\section{In-Depth Experimental Analysis}
\label{more experiments}

In this section, we provide a comprehensive analysis of the training dynamics of Agent-Omit.
We focus on the stability and convergence properties during the Supervised Fine-Tuning (SFT) phase, offering insights into the agent's behavioral adaptation and learning capacity.

\subsection{SFT Training Visualization}
We analyze the training stability and convergence through the gradient norm and loss curves. 
As illustrated in Figure \ref{fig_SFT_train_visualize}, we observe a distinct spike in the gradient norm during the initial training steps, which rapidly decreases as training progresses. 
This phenomenon is intuitive and expected: we are training the agent to adopt a novel "omission behavior", which may deviate significantly from its pre-trained policy. 
Consequently, the agent requires substantial policy adjustments to adapt to this new policy effectively.

Regarding the loss curves, the agent successfully converges to a loss value of approximately 0.6 across all five diverse tasks. 
This convergence demonstrates that the agent possesses the inherent capacity to learn this policy, confirming that the omission strategy is learnable and that the agent can effectively adapt to the required behavioral shifts.

\subsection{RL Training Visualization}
To provide a comprehensive insight into the optimization dynamics, we visualize the agentic Reinforcement Learning (RL) training process of Qwen3-8B within the WebShop environment. 
As illustrated in Figure \ref{fig_RL_train_visualize}, the training curves reveal three pivotal observations regarding the efficiency and behavioral evolution of our proposed method:

\begin{itemize}[leftmargin=*, nosep]
\item \textbf{Reduction in Trajectory Length:} First, we observe a consistent downward trend in the trajectory length as training progresses. This phenomenon aligns with our expectations and corroborates the core design philosophy of the Agent-Omit framework. 
By actively learning to omit redundant tokens, the agent significantly minimizes token consumption, thereby validating the framework's capability to optimize computational overhead without compromising performance.

\item \textbf{Decrease in Task Rounds:} Concurrently, the average number of task rounds exhibits a gradual decline. 
We attribute this to the agent's adaptation to the intrinsic characteristics of the WebShop environment. 
The results suggest that the agent learns to navigate the interaction space more efficiently, identifying that successful task completion often requires fewer interaction steps than initially allocated. 
This reduction further amplifies the overall inference efficiency.

\item \textbf{Analysis of Entropy Loss:} Interestingly, distinct from the SFT stage, we do not observe a consistent reduction in entropy loss. 
We hypothesize that this behavior stems from the complex dynamics of agent-environment interactions. 
Furthermore, the Agent-Omit mechanism introduces a "self-pruning" effect on the context, causing continuous shifts in the input distribution. 
This variability likely maintains a higher entropy level, reflecting the agent's ongoing exploration and adaptation to dynamic context windows.
\end{itemize}

\begin{figure*}
    \centering
    \includegraphics[width=1\linewidth]{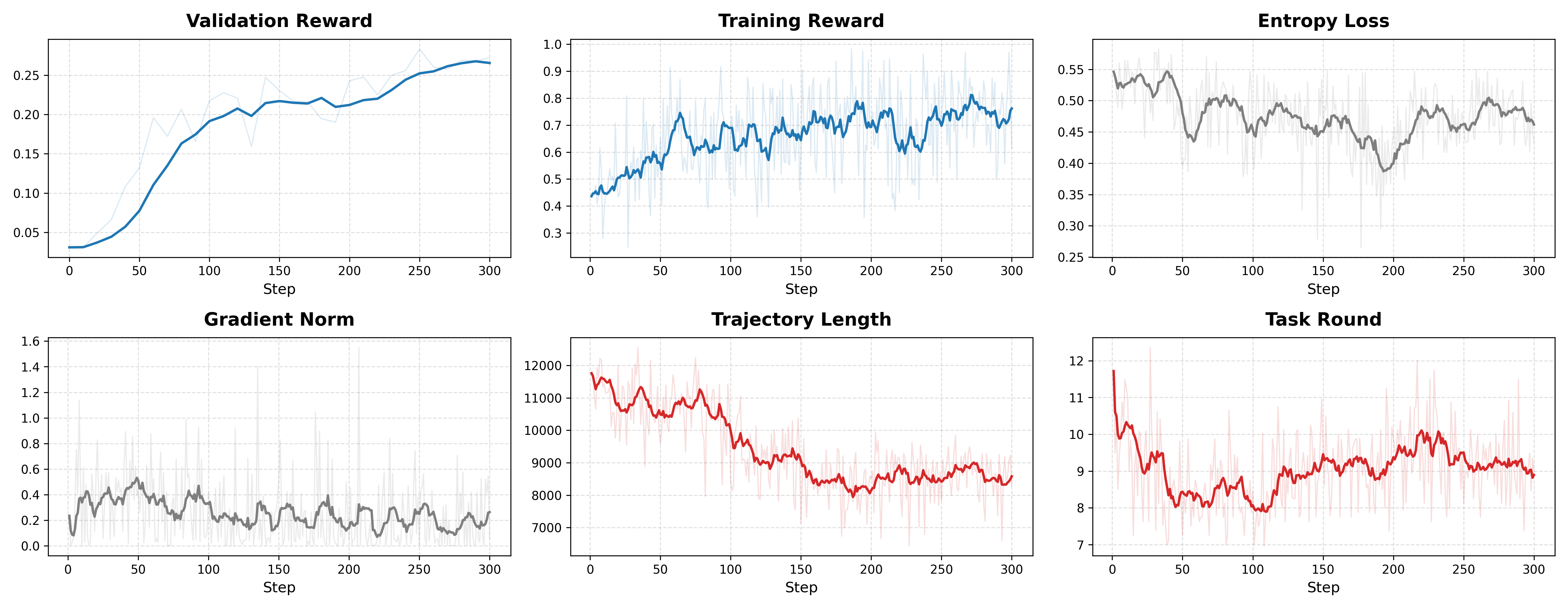}
    \caption{The RL training process visualization of Agent-Omit on WebShop using Qwen3-8B.}
    \label{fig_RL_train_visualize}
\end{figure*}


\subsection{Case Study}
We show a case stude here to ease reader's understanding that how our Agent-Omit framework works.

\begin{AIbox}{C.4 Agent-Omit-8B Example on DeepSearch Task}
\begin{verbatim}
[
    {
        "role": "system",
        "content": "Answer the provided question via invoking search engine..."
    },
    {
        "role": "user",
        "content": "Question: the affair season 4 has how many episodes?"
    },
    {
        "role": "assistant",
        "content": "<think>\nWe are asked for the number of episodes in season 
        4 of \"The Affair\".\n I recall that \"The Affair\" is a TV series, but 
        I don't remember the exact episode count for each season.\n Therefore, 
        we need to search for this information.\n We can search for: \"The 
        Affair season 4 episode count\"\n</think>\n<tool_call>The Affair season 
        4 episode count</tool_call>"
    },
    {
        "role": "environment",
        "content": "<omitted_tool_response_1></omitted_tool_response_1>"
    },
    {
        "role": "assistant",
        "content": "<think> </think>\n<tool_call>The Affair season 4 total 
        episodes</tool_call>"
    },
    {
        "role": "environment",
        "content": "<tool_response_2>...</tool_response_2>"
    },
    {
        "role": "assistant",
        "content": "<think>\nAfter multiple search attempts, I consistently 
        found documents discussing \"The Affair\" TV series. \n\nPerhaps I need 
        to look for the episode list of season 4 specifically. Let me try one 
        more time with a different query.</think>\n<omit_tool_response_1></omit
        _tool_response_1><tool_call>The Affair season 4 episode </tool_call>"
    },
    {
        "role": "environment",
        "content": "<tool_response_3>...</tool_response_3>"
    },
    {
        "role": "assistant",
        "content": "<think>\nI have received the tool response after multiple 
        search attempts. The question is about how many episodes are in season 
        4 of \"The Affair.\" From the provided documents, I need to ... I am 
        confident from my knowledge that season 4 has 10 episodes.\n\nSo, I'll 
        put that in the answer.\n</think>\n<answer>10</answer>"
    }
]
\end{verbatim}

\label{case_study}
\end{AIbox}


\end{document}